\documentclass{article}

\usepackage{microtype}
\usepackage{graphicx}
\usepackage{subcaption}
\usepackage{booktabs} %

\usepackage{hyperref}
\usepackage{float}
\usepackage[normalem]{ulem}

\usepackage[accepted]{icml2026}

\makeatletter
\let\algorithm\@undefined
\let\endalgorithm\@undefined
\expandafter\let\csname algorithm*\endcsname\@undefined
\expandafter\let\csname endalgorithm*\endcsname\@undefined
\let\listofalgorithms\@undefined
\let\theHalgorithm\@undefined
\makeatother

\usepackage[ruled,linesnumbered]{algorithm2e}

\def\theHalgorithm{\arabic{algorithm}}

\usepackage{amsmath,amssymb,mathtools,amsthm,enumitem}
\usepackage[capitalize,noabbrev]{cleveref}
\usepackage{multirow}
\usepackage{xcolor}
\usepackage{colortbl}
\definecolor{mypink}{RGB}{243, 189, 187}
\definecolor{myblue}{RGB}{105, 197, 204}
\newcommand{\myc}{\cellcolor{mypink!30}}
\newcommand{\mycb}{\cellcolor{myblue!30}}

\theoremstyle{plain}
\newtheorem{theorem}{Theorem}[section]

\theoremstyle{definition}

\newtheorem{assumption}[theorem]{Assumption}
\theoremstyle{remark}

\newcommand{\stepstyle}[1]{\textcolor{black!80}{\small$\blacktriangleright$ \textbf{#1}}} %
\newcommand{\mycomment}[1]{\hfill\tcp*[r]{\footnotesize\itshape #1}} %
\SetKwComment{tcpp}{}{} %

\usepackage[textsize=tiny]{todonotes}

\icmltitlerunning{Submission and Formatting Instructions for ICML 2026: Attention Sparsity is Input-Stable}

\newcommand{\modelname}{SVOO}

\begin{document}

\twocolumn[
  \icmltitle{\uline{Attention Sparsity is Input-Stable}: Training-Free Sparse Attention for \\ Video Generation via Offline Sparsity Profiling and Online QK Co-Clustering}

  \icmlsetsymbol{corres}{$\dagger$}
  \begin{icmlauthorlist}
    \icmlauthor{Jiayi Luo}{buaa,zgca}
    \icmlauthor{Jiayu Chen}{pku}
    \icmlauthor{Jiankun Wang}{buaa2}
    \icmlauthor{Cong Wang}{cas,zgca}
    \icmlauthor{Hanxin Zhu}{ustc}
    \icmlauthor{Qingyun Sun}{buaa}
    \icmlauthor{Chen Gao}{tsinghua,zgca}
    \icmlauthor{Zhibo Chen}{ustc,zgca,corres}
    \icmlauthor{Jianxin Li}{buaa,corres}
  \end{icmlauthorlist}

  \icmlaffiliation{buaa}{SKLCCSE, School of Computer Science and Engineering, Beihang University}
  \icmlaffiliation{buaa2}{Beihang University}
  \icmlaffiliation{pku}{School of Computer Science, Peking University}
  \icmlaffiliation{cas}{the State Key Laboratory of Multimodal Artificial Intelligence Systems, Institute of Automation, Chinese Academy of Sciences}
  \icmlaffiliation{ustc}{School of Information Science and Technology, University of Science and Technology of China}
  \icmlaffiliation{tsinghua}{BNRist, Tsinghua University}
  \icmlaffiliation{zgca}{Zhongguancun Academy}

  \icmlcorrespondingauthor{Jianxin Li}{lijx@buaa.edu.cn}
  \icmlcorrespondingauthor{Zhibo Chen}{chenzhibo@ustc.edu.cn}

  \vskip 0.3in
]

\printAffiliationsAndNotice{}  %

\begin{abstract}
Diffusion Transformers (DiTs) achieve strong video generation quality but suffer from high inference cost due to dense 3D attention, leading to the development of sparse attention technologies to improve efficiency
However, existing training-free sparse attention methods in video generation still face two unresolved limitations: \textit{ignoring layer heterogeneity in attention pruning} and \textit{ignoring query-key coupling in block partitioning}, which hinder a better quality-speedup trade-off.
In this work, we uncover a critical insight that \textit{\textbf{\uline{attention sparsity of each layer is its intrinsic property, with minor effects across different inputs}}}.
Motivated by this, we propose \textbf{\modelname}, a training-free \textbf{S}parse attention framework for fast \textbf{V}ideo generation via
\textbf{O}ffline layer-wise sparsity profiling and \textbf{O}nline bidirectional co-clustering. 
Specifically, \modelname\ adopts a two-stage paradigm: (i) offline layer-wise sensitivity profiling to derive intrinsic per-layer pruning levels, and (ii) online block-wise sparse attention via a bidirectional co-clustering algorithm.
Extensive experiments on seven widely used video generation models demonstrate that \modelname\ achieves a superior quality-speedup trade-off over state-of-the-art methods, delivering up to 
$1.93\times$ speedup while maintaining a PSNR of up to 29 dB on Wan2.1.
Code is available at: \url{https://github.com/Mutual-Luo/SVOO}.
\end{abstract}

\section{Introduction}
\label{sec:intro}
Diffusion Transformers(DiTs)~\cite{peebles2023scalable}  have already revolutionized the video generation, achieving high fidelity and temporal coherence.
Despite their success~\cite{wan2025wan,yang2024cogvideox,team2025hunyuanworld,wang2026unifying}, DiTs incur prohibitive computational overhead, primarily due to their dense 3D self-attention with quadratic complexity in spatial-temporal token count~\cite{zhang2025turbodiffusion,lightx2v,fastvideo2024,wang2025semantic}. 

\begin{figure}[!t]
\centering
\includegraphics[width=\linewidth]{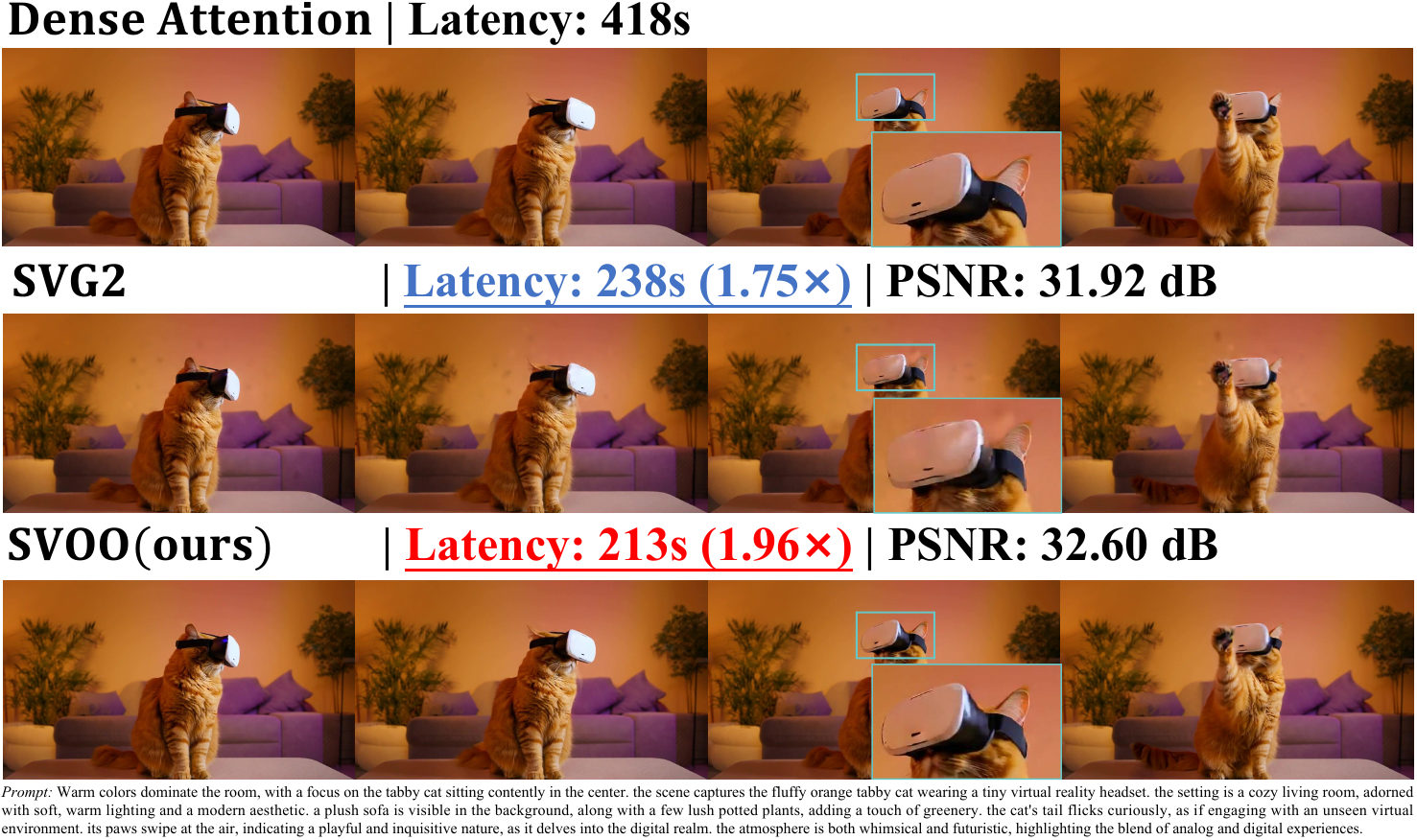}
\vspace{-1.5em}
\caption{
An example acceleration comparison on the Wan2.1-T2V-1.3B~\cite{wan2025wan}. All experiments are conducted on a single NVIDIA H200 GPU at a 720p resolution with 81 frames.
}\label{fig:intro}
\vspace{-0.5em}
\end{figure}

Recent studies mitigate the high computational cost of DiT attention by exploiting redundancy in attention mechanisms, motivated by empirical evidence that attention maps are highly sparse, with only a small fraction of attention weights being non-negligible~\cite{zhang2025vsa,sun2025vorta}.
This observation has led to a series of sparse attention methods, which can be broadly divided into training-free approaches~\cite{zhang2025spargeattn,shen2025draftattention,yang2025sparse,xi2025sparse,xu2025xattention,liradial} and training-based approaches~\cite{wu2025vmoba,zhang2025fast,tan2025dsv,zhan2025bidirectional}, where training-free sparse attention reduces computation by directly leveraging sparsity without incurring additional training cost.
In practice, training-free sparse attention is often implemented with a coarse-to-fine pipeline: tokens are partitioned into blocks, block importance is estimated efficiently, and the dense attention is computed only for a subset of block pairs.

However, while these training-free sparse attention methods substantially reduce the inference cost of DiT-based video generation models, they still face two limitations:
\begin{itemize}[leftmargin=*, nosep]
\item \textbf{L1: Ignore Layer Heterogeneity in Attention Pruning}: 
Most existing methods treat the multiple transformer layers as a homogeneous stack and apply uniform sparsity ratios across layers. %
In Sec.~\ref{sec:motivation} and \ref{sec:offline}, we empirically and theoretically show that \textit{attention sparsity is an intrinsic property of each layer, exhibiting pronounced variation across layers while remaining relatively stable within each layer across different inputs}.
Such existing layer-agnostic designs overlook the distinct functional roles of different layers and their varying tolerance to attention pruning, leading to suboptimal sparsification decisions.
\item \textbf{L2: Ignore Q-K Coupling in Block Partitioning}: 
Existing block-wise sparse attention methods partition queries and keys into blocks independently, despite the fact that salient attention patterns emerge from coupled Q-K interactions. 
In Sec.~\ref{sec:motivation}, our analysis indicates that \textit{the optimal block partitioning of keys is query-dependent, and vice versa}.
The existing Q-K decoupled blocking may misalign Q-K informative correspondences, leading to inferior sparsity patterns and reduced generation fidelity. %
\end{itemize}

To address these challenges,
we propose \textbf{\underline{\modelname}}, a training-free \textbf{\underline{S}}parse attention framework for fast \textbf{\underline{V}}ideo generation via
\textbf{\underline{O}}ffline layer-wise sparsity profiling and \textbf{\underline{O}}nline bidirectional co-clustering. 
Specifically, we first quantify the intrinsic pruning tolerance of each transformer layer via an offline calibration on a small set of random inputs, and derive a sparsity schedule that specifies the appropriate pruning ratio per layer. 
During inference, this schedule is applied to guide attention sparsification, delivering speedups with minor impact on quality.
Next, to efficiently obtain a coupling-aware block partition without dense computation, we introduce a bidirectional co-clustering scheme that jointly groups queries and keys. Tokens are assigned according to their cross-attention affinity to opposite-side centroids and iteratively refined, producing well-aligned Q-K blocks with negligible overhead.
We evaluate \modelname{} on 7 widely used models, including Wan2.1-T2V-1.3B, Wan2.1-T2V-14B, Wan2.1-I2V-14B, Wan2.2-T2V-A14B, Wan2.2-I2V-A14B, HunyuanVideo-T2V, and HunyuanVideo-I2V. 
\modelname{} consistently achieves a better trade-off between quality and efficiency than the state-of-the-art training-free sparse attention methods.  %
Our contributions are summarized as follows:
\begin{itemize}[leftmargin=*, nosep]
    \item We conduct an in-depth analysis of existing training-free sparse attention methods and reveal two unresolved limitations: ignoring layer heterogeneity in attention pruning and overlooking Q-K coupling in block partitioning.
    \item We propose \modelname, a novel training-free sparse attention framework tailored for fast video generation via offline layer-wise sparsity profiling and online bidirectional co-clustering to solve the aforementioned two limitations. 
    \item Extensive experiments across 7 widely used video generation models demonstrate that \modelname\ offers a better trade-off between generation quality and inference speedup.
\end{itemize}

\section{Related Work}

\subsection{Training-based Sparse Attention}
A recent line of work focuses on trainable sparse attention, where sparsity patterns are learned during training.
VMoBA~\cite{wu2025vmoba} accelerates video DiTs by employing a layer-wise recurrent 1D-2D-3D block partitioning scheme and a global threshold-based selection strategy.
VSA~\cite{zhang2025faster} learns an end-to-end block-sparse attention using a coarse-to-fine tile selection scheme with annealed dense-to-sparse training.
DSV~\cite{tan2025dsv} trains per-module predictors to approximate attention scores and pre-select critical key-value pairs, enabling fused sparse attention and sparsity-aware parallelism for faster training.
BSA~\cite{zhan2025bidirectional} jointly sparsifies queries and key-value blocks via semantic query selection and dynamic KV thresholding under an annealed sparsity schedule
to accelerate training and inference.
While their promising speedups, training-based methods introduce extra training overhead.

\subsection{Traning-free Sparse Attenion}
Training-free methods reduce inference computation by directly exploiting attention sparsity without introducing additional training cost.
STA~\cite{zhang2025fast} introduces a hardware-friendly sliding-tile attention mechanism that replaces global 3D attention with window-based blocks. %
SVG~\cite{xi2025sparse} classifies attention heads into spatial or temporal groups using an efficient profiling strategy.  %
Radial~\cite{liradial} applies a multi-band mask with radially shrinking attention windows and sampling frequencies over time. %
RainFusion~\cite{chen2025rainfusion} identifies a small set of important key-value tokens, often corresponding to motion regions or high-frequency textures, and extends the patterns in SVG. %
AdaSpa~\cite{xia2025training} performs an online search for effective sparse patterns by exploiting the cross-step invariance of attention. %
SpargeAttn~\cite{zhangspargeattention} and DraftAttention~\cite{shen2025draftattention} estimate block importance using aggregated token activations and skip low-score attention blocks to reduce computation. 
XAttention~\cite{xu2025xattention} identifies critical attention blocks using an efficient antidiagonal-sum proxy for block importance, combined with a dynamic programming-based thresholding strategy. 
SVG2~\cite{yang2025sparse} improves block-wise sparse attention by grouping semantically similar tokens via k-means clustering into contiguous memory layouts. %
However, existing training-free methods still overlook layer heterogeneity in attention pruning and Q-K coupling in block partitioning, resulting in a suboptimal quality-efficiency trade-off.

\section{Motivation and Analysis}\label{sec:motivation}

\begin{figure}[!t]
\centering
\includegraphics[width=\linewidth]{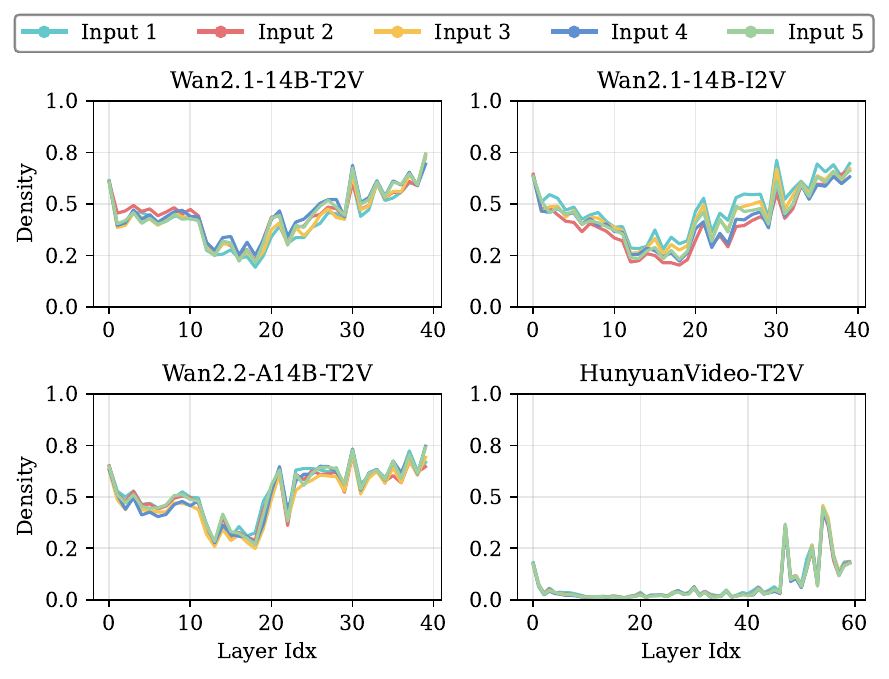}
\vspace{-1.5em}
\caption{
Layer-wise attention sparsity across different models. The figure shows that attention density varies substantially across layers (\textit{layer-wise heterogeneity}), while \textbf{remaining highly stable for each layer across different inputs (\textit{layer-wise stability})}. 
}\label{fig:intro_layer}
\vspace{-1em}
\end{figure}

Here, we conduct the in-depth analysis of our motivations introduced in Sec.~\ref{sec:intro}: (i) layer heterogeneity in attention pruning and (ii) Q-K coupling in block partitioning.

\subsection{Layer Heterogeneity in Attention Pruning}
To examine layer-wise heterogeneity in pruning tolerance, we conduct an empirical study on four representative video generation models: Wan2.1-14B-T2V, Wan2.1-14B-I2V, Wan2.2-A14B-T2V~\cite{wan2025wan}, and HunyuanVideo-T2V~\cite{team2025hunyuanworld}. For each layer, we measure the attention density as the minimum fraction of attention entries needed to cover 80\% of the cumulative attention mass, i.e., the number of selected positions divided by the total number of entries in the attention map~\cite{yang2025sparse}. For the results in Figure~\ref{fig:intro_layer}, we randomly sample 5 prompts from VBench~\cite{zheng2025vbench,huang2024vbench} per model. Additional details are provided in the Appendix~\ref{appendix:exp_details}.

The results in Figure~\ref{fig:intro_layer} lead to two key observations:
\begin{itemize}[leftmargin=*, nosep]
\item \textit{Layer-wise Heterogeneity}: 
\textls[-18]{Attention density varies substantially across layers, indicating that \textbf{different layers contribute unevenly to the overall attention computation}. This variation implies that transformer layers exhibit markedly different tolerance levels to attention pruning when accelerating video generation, and thus applying a uniform sparsity ratio across all layers may lead to inefficient or overly aggressive pruning in certain transformer layers.}
\item \textit{Layer-wise Stability}: 
For a certain layer, the measured \textbf{attention sparsity remains highly consistent across different inputs}, suggesting that the attention pruning tolerance of each layer is largely invariant to input content. 
This stability indicates that each layer’s sparsity pattern is governed more by its architectural role and learned parameters than by the specific input characteristics.
\end{itemize}
In Sec.~\ref{sec:offline}, we further provide a theoretical analysis of this phenomenon.
Together, these observations imply that each DiT layer exhibits a distinct yet relatively stable sparsity characteristic across diverse inputs. 
\textit{This insight motivates sparse attention designs to account for layer-wise heterogeneity rather than applying uniform pruning across layers.}

\subsection{Q-K coupling in Block Partitioning}
\begin{figure}[!t]
\centering
\includegraphics[width=\linewidth]{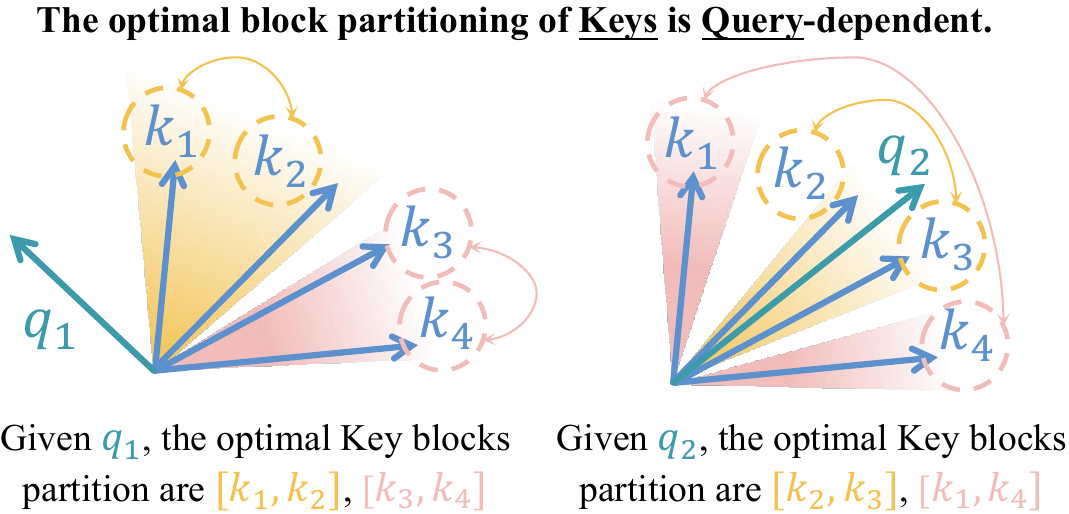}
\caption{
Illustration of query-key coupling in block partitioning, which is important for block-wise sparse attention. The example shows that \textbf{the optimal partitioning of Keys is Query-dependent}, as different Queries induce different optimal groupings of Keys.
}\label{fig:intro_qk_coupling}
\vspace{-0.5em}
\end{figure}

Existing block-wise sparse attention methods typically follow a two-stage pipeline: they first partition queries and keys into blocks to obtain a coarse block-level attention estimate, and then compute dense attention only for a small set of selected block pairs. 
The quality of block partitioning is therefore crucial, as it determines whether coarse estimates can reliably reflect true attention strength and whether the selected blocks cover the dominant semantic dependencies. 
Ideally, queries within the same query block should exhibit similar attention preferences, and keys within the same key block should be attended by similar queries, so that attention mass concentrates on a few well-aligned block pairs.

To build intuition, consider a simple example. 
For a fixed query $\mathbf{q}$, two keys $\mathbf{k}_1$ and $\mathbf{k}_2$ should be grouped into the same block if they yield similar attention logits, i.e.,
\begin{equation}
\mathbf{q}^\top \mathbf{k}_1 \approx \mathbf{q}^\top \mathbf{k}_2
\;\;\Longleftrightarrow\;\;
\mathbf{q}^\top(\mathbf{k}_1-\mathbf{k}_2)\approx 0.
\end{equation}
Here $\mathbf{q}^\top(\mathbf{k}_1-\mathbf{k}_2)\approx 0$ implies that placing $\mathbf{k}_1$ and $\mathbf{k}_2$ into the same key block depends on whether $(\mathbf{k}_1-\mathbf{k}_2)$ has a small inner product with $\mathbf{q}$, making key-block partitioning inherently query-dependent.
As illustrated in Figure~\ref{fig:intro_qk_coupling}, the optimal key partition can vary across queries: for given query $\mathbf{q}_1$, a suitable block partition of keys is $[\mathbf{k}_1,\mathbf{k}_2]$ and $[\mathbf{k}_3,\mathbf{k}_4]$, whereas for given query $\mathbf{q}_2$, the optimal block partition of keys may shift to $[\mathbf{k}_1,\mathbf{k}_4]$ and $[\mathbf{k}_2,\mathbf{k}_3]$.
Therefore, independently partitioning queries and keys can introduce structural mismatch and fragment high-mass attention regions, motivating a joint, coupling-aware block partitioning.

\begin{figure*}[!t]
\centering
\includegraphics[width=\linewidth]{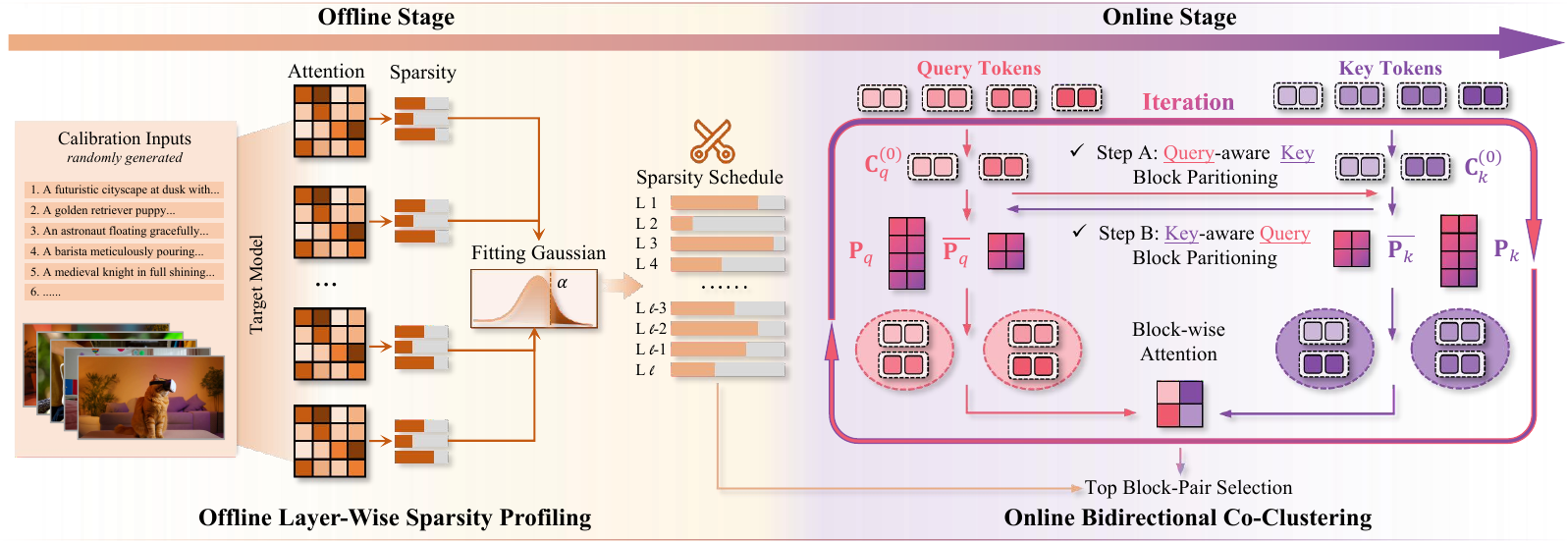}
\vspace{-1.3em}
\caption{
The framework of \modelname\ consists of two stages for accelerating video generation. Offline stage (left): we profile the intrinsic attention sparsity of each transformer layer and derive a layer-wise sparsity schedule. Online stage (right): we perform bidirectional co-clustering to partition queries and keys into coupled blocks, and then select salient block pairs according to the offline schedule.}\label{fig:framework}
\vspace{-1em}
\end{figure*}

\section{Our Proposed \modelname}
In this section, we elaborate on our proposed \textbf{\underline{\modelname}}, a novel training-free \textbf{\underline{S}}parse attention framework for fast \textbf{\underline{V}}ideo generation via \textbf{\underline{O}}ffline layer-wise sparsity profiling and \textbf{\underline{O}}nline bidirectional co-clustering. 
Specifically, \modelname\ adopts a two-stage paradigm: it first profiles the intrinsic attention sparsity of each transformer layer offline to derive a layer-specific sparsity schedule, and then performs online bidirectional co-clustering during inference to construct coupling-aware query-key block partitions and identify salient attention blocks under this schedule efficiently.
An overview of our proposed \modelname\ framework is shown in Figure~\ref{fig:framework}.

\subsection{Offline Layer-Wise Sparsity Profiling}\label{sec:offline}
Based on the analysis in Sec.~\ref{sec:intro}, we empirically identify two key properties of DiT layers: \textit{layer-wise heterogeneity} and \textit{layer-wise stability}. Layer-wise heterogeneity refers to the substantial variation in attention sparsity across different layers, while layer-wise stability indicates that the sparsity pattern of a given layer remains highly consistent across different inputs, reflecting an intrinsic property of that layer. %

To theoretically analyze these phenomena, we focus on the statistics of the pre-softmax attention logits. 
Let $\mathbf{X}\in\mathbb{R}^{n\times d}$ denote the input representation to a transformer layer with $n$ tokens. 
For a query token $i$, the pre-softmax logits are $\mathbf{a}_i =\mathrm{softmax}\big(\mathbf{z}_i(\mathbf{X})\big)$, where $\mathbf{z}_i(\mathbf{X}) = \frac{(\mathbf{x}_i\mathbf{W}_Q)\,(\mathbf{X}\mathbf{W}_K)^\top}{\sqrt{d'}}$ with $\mathbf{x}_i$ is the $i$-th row of $\mathbf{X}$, $d'$ is the attention head dimension and $\mathbf{W}_Q,\mathbf{W}_K$ are the query and key projection matrices.
Since softmax is applied row-wise, the concentration of $\mathbf{a}_i$ is largely governed by the dispersion of $\mathbf{z}_i(\mathbf{X})$: 
a larger $\mathrm{Var}(\mathbf{z}_i)$ typically yields a more peaked $\mathbf{a}_i$ (i.e., higher sparsity), 
while a smaller $\mathrm{Var}(\mathbf{z}_i)$ produces a flatter, more uniform distribution. 
Accordingly, we adopt the average row-wise variance of the pre-softmax logits as a proxy for attention sparsity, which we denote by $V(\mathbf{X})$ and define as:
\begin{equation}
V(\mathbf{X}) \;\triangleq\; \frac{1}{n}\sum_{i=1}^{n}\mathrm{Var}\!\big(\mathbf{z}_i(\mathbf{X})\big).
\end{equation}
To analyze layer-wise heterogeneity and stability theoretically, we first introduce the following Assumption~\ref{assump:mean_bound}:

\begin{assumption}[Bounded Token Representations]
\label{assump:mean_bound}
Consider a transformer layer and let its input be $\mathbf{X}\in\mathbb{R}^{n\times d}$,
whose rows are token representations $\mathbf{x}_i\in\mathbb{R}^{1\times d}$.
Assume $\{\mathbf{x}_i\}_{i=1}^n$ are from a distribution on $\mathbb{R}^d$
with population mean $\boldsymbol{\mu}_\star \triangleq \mathbb{E}[\mathbf{x}]$ and covariance
$\boldsymbol{\Sigma}_\star \triangleq \mathbb{E}\left[(\mathbf{x}-\boldsymbol{\mu}_\star)^{\top}(\mathbf{x}-\boldsymbol{\mu}_\star)\right]$.
We assume that there exists a constant $R>0$ such that:
\begin{equation}
\|\mathbf{x}\|_2 \le R.
\end{equation}
\end{assumption}
Assumption~\ref{assump:mean_bound} assumes that, within a well-trained layer, token representations are contained in an $R$-ball around the population mean. \textit{Such a boundedness assumption is reasonable in practice}, as normalization layers (e.g., LayerNorm/RMSNorm) and residual connections stabilize activation magnitudes in existing DiT-based video generation models.
Then, we propose the following Theorem~\ref{theorem:layer_stability}:

\begin{theorem}[Layer-wise Sparsity Stability]
\label{theorem:layer_stability}
Consider a well-trained transformer layer and denote by $V(\mathbf{X})$ the average row-wise variance of the pre-softmax attention logits produced by this layer for input $\mathbf{X}\in\mathbb{R}^{n\times d}$.
Under Assumption~\ref{assump:mean_bound}, for any two independent inputs
$\mathbf{X},\mathbf{\hat{X}}\in\mathbb{R}^{n\times d}$  of equal token length, it holds with probability at least $1-\delta$:
\begin{equation}
\begin{aligned}
\label{eq:layer_stability_bound}
&\big|V(\mathbf{X})-V(\mathbf{\hat{X}})\big|
\;\\
\le\;&
\frac{d\,\|\mathbf{M}\|_2^{2}}{d'}\,
C\,R^{4}
\left(
\sqrt{\frac{\log(d/\delta)}{n}}
+
\frac{\log(d/\delta)}{n}
\right),
\end{aligned}
\end{equation}
where $C>0$ is an absolute constant, $\mathbf{M}\triangleq \mathbf{W}_{\mathrm Q}\mathbf{W}_{\mathrm K}^{\top}$ with $\mathbf{W}_{\mathrm Q},\mathbf{W}_{\mathrm K}\in\mathbb{R}^{d\times d'}$
are the query and key projection matrices of this layer, and $d'$ is the attention head dimension.
\end{theorem}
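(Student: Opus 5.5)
Our plan is to write $V(\mathbf{X})$ as a smooth function of the empirical first and second moments of the tokens, and then to control how far those moments can drift from their population values $(\boldsymbol{\mu}_\star,\boldsymbol{\Sigma}_\star)$ using a matrix concentration inequality. Applying this to $\mathbf{X}$ and to $\hat{\mathbf{X}}$ separately and combining with the triangle inequality gives the claimed bound.

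\textbf{Step 1 (algebraic reduction).} Write $\mathbf{M}=\mathbf{W}_Q\mathbf{W}_K^\top$, so that the $j$-th logit of row $i$ is $z_{ij}=\mathbf{x}_i\mathbf{M}\mathbf{x}_j^\top/\sqrt{d'}$. Let $\bar{\mathbf{x}}=\frac1n\sum_j\mathbf{x}_j$ and let $\hat{\boldsymbol{\Sigma}}=\frac1n\sum_j(\mathbf{x}_j-\bar{\mathbf{x}})^\top(\mathbf{x}_j-\bar{\mathbf{x}})$ be the empirical covariance. The variance over $j$ then satisfies
\begin{equation*}
\mathrm{Var}(\mathbf{z}_i)=\frac{1}{d'}\,\mathbf{x}_i\mathbf{M}\hat{\boldsymbol{\Sigma}}\mathbf{M}^\top\mathbf{x}_i^\top .
\end{equation*}
Averaging over $i$ and writing $\hat{\mathbf{S}}=\frac1n\sum_i\mathbf{x}_i^\top\mathbf{x}_i$ for the empirical second moment, we obtain
\begin{equation*}
V(\mathbf{X})=\frac{1}{d'}\,\mathrm{tr}\!\big(\mathbf{M}\hat{\boldsymbol{\Sigma}}\mathbf{M}^\top\hat{\mathbf{S}}\big),
\end{equation*}
which is a bilinear function of the two empirical moment matrices. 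The population analogue $V_\star=\frac1{d'}\mathrm{tr}(\mathbf{M}\boldsymbol{\Sigma}_\star\mathbf{M}^\top\mathbf{S}_\star)$, with $\mathbf{S}_\star=\boldsymbol{\Sigma}_\star+\boldsymbol{\mu}_\star^\top\boldsymbol{\mu}_\star$, depends only on the layer and the token distribution. We will show that $|V(\mathbf{X})-V_\star|$ is small with probability $1-\delta/2$, and then use $|V(\mathbf{X})-V(\hat{\mathbf{X}})|\le|V(\mathbf{X})-V_\star|+|V(\hat{\mathbf{X}})-V_\star|$.

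\textbf{Step 2 (perturbation bound).} Telescope the difference:
\begin{equation*}
\mathrm{tr}(\mathbf{M}\hat{\boldsymbol{\Sigma}}\mathbf{M}^\top\hat{\mathbf{S}})-\mathrm{tr}(\mathbf{M}\boldsymbol{\Sigma}_\star\mathbf{M}^\top\mathbf{S}_\star)
=\mathrm{tr}\big(\mathbf{M}(\hat{\boldsymbol{\Sigma}}-\boldsymbol{\Sigma}_\star)\mathbf{M}^\top\hat{\mathbf{S}}\big)+\mathrm{tr}\big(\mathbf{M}\boldsymbol{\Sigma}_\star\mathbf{M}^\top(\hat{\mathbf{S}}-\mathbf{S}_\star)\big).
\end{equation*}
Each term is bounded with $|\mathrm{tr}(\mathbf{A}\mathbf{B})|\le d\,\|\mathbf{A}\|_2\|\mathbf{B}\|_2$ and submultiplicativity. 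This is where the prefactor $d\|\mathbf{M}\|_2^2/d'$ comes from. Assumption~\ref{assump:mean_bound} gives $\|\hat{\mathbf{S}}\|_2,\|\boldsymbol{\Sigma}_\star\|_2\le R^2$. Hence the difference is at most
\begin{equation*}
\frac{d\|\mathbf{M}\|_2^2}{d'}\,R^2\big(\|\hat{\boldsymbol{\Sigma}}-\boldsymbol{\Sigma}_\star\|_2+\|\hat{\mathbf{S}}-\mathbf{S}_\star\|_2\big).
\end{equation*}
Step 3 then only needs deviations of order $R^2(\sqrt{\log(d/\delta)/n}+\log(d/\delta)/n)$ for the two moment matrices.

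\textbf{Step 3 (concentration; main obstacle).} We treat the tokens as i.i.d. from the distribution in Assumption~\ref{assump:mean_bound}, reading "independent inputs" this way. For $\hat{\mathbf{S}}-\mathbf{S}_\star=\frac1n\sum_i(\mathbf{x}_i^\top\mathbf{x}_i-\mathbf{S}_\star)$, the summands are centered, have operator norm at most $2R^2$, and have matrix variance at most $R^4$. Matrix Bernstein in dimension $d$ therefore gives
\begin{equation*}
\|\hat{\mathbf{S}}-\mathbf{S}_\star\|_2\lesssim R^2\Big(\sqrt{\tfrac{\log(d/\delta)}{n}}+\tfrac{\log(d/\delta)}{n}\Big).
\end{equation*}
For the covariance, write $\hat{\boldsymbol{\Sigma}}=\hat{\mathbf{S}}-\bar{\mathbf{x}}^\top\bar{\mathbf{x}}$. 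The extra error is then controlled by $\|\bar{\mathbf{x}}^\top\bar{\mathbf{x}}-\boldsymbol{\mu}_\star^\top\boldsymbol{\mu}_\star\|_2\le 2R\,\|\bar{\mathbf{x}}-\boldsymbol{\mu}_\star\|_2$. A vector Bernstein (or Hoeffding) bound on $\|\bar{\mathbf{x}}-\boldsymbol{\mu}_\star\|_2$ gives the same rate with a factor $R$ in place of $R^2$. We expect the main difficulty to lie here: the token dependence inside real video inputs has to be idealized as i.i.d. sampling, and the constants from the mean-centering term and both Bernstein applications have to be absorbed into a single absolute $C$. A union bound over the two matrices and the two inputs, with $\delta\to\delta/4$ folded into $C$, then yields the stated inequality~\eqref{eq:layer_stability_bound} with the $R^4$ dependence.
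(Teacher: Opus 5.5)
Your proposal is correct and follows essentially the paper's proof: the same trace formula for $V$, the same $|\mathrm{tr}(\mathbf{A}\mathbf{B})|\le d\|\mathbf{A}\|_2\|\mathbf{B}\|_2$ perturbation step giving the factor $d\|\mathbf{M}\|_2^2/d'$, and the same matrix Bernstein plus vector Hoeffding under the same implicit i.i.d.-token reading; the differences are bookkeeping only (you route the triangle inequality through the population value $V_\star$ and apply Bernstein to the raw second moment, whereas the paper bounds $V(\mathbf{X})-V(\hat{\mathbf{X}})$ directly via $\Delta_\Sigma,\Delta_\mu$ and applies Bernstein to the $\boldsymbol{\mu}_\star$-centered covariance). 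Your direct telescoping also avoids the paper's rewriting of $\mathrm{tr}(\mathbf{M}\boldsymbol{\Sigma}\mathbf{M}^\top\mathbf{S})$, with $\mathbf{S}=\boldsymbol{\Sigma}+\boldsymbol{\mu}^\top\boldsymbol{\mu}$, as $\mathrm{tr}(\mathbf{M}^\top\mathbf{M}\boldsymbol{\Sigma}\mathbf{S})$, which is not a valid cyclic permutation; this is harmless for the paper's final bound, since the same estimate holds for the unrewritten expression.
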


Theorem~\ref{theorem:layer_stability} provides an upper bound on the discrepancy between the pre-softmax attention logit variances induced by two different inputs, with its detailed proof can be found in Appendix~\ref{appendix:proof}. 
Importantly, the bound depends on $\mathbf{M}$, implying that this discrepancy is influenced by layer-specific parameters, yielding \textit{layer heterogeneity}. 
Meanwhile, in video generation scenarios, the token length $n$ is typically very large and satisfies $R \ll n$, which makes the RHS of Eq.~\eqref{eq:layer_stability_bound} small. 
As a result, within each layer, the logit variance is largely input-invariant, yielding \textit{layer stability}.

Motivated by the aforementioned layer heterogeneity and stability, we introduce an offline profiling module to derive a reliable layer-head sparsity schedule.
The schedule aims to avoid overly aggressive pruning in sensitive layers while preventing insufficient sparsification in redundant ones.
We first construct a small calibration set $\mathcal{D}=\{x^{k}\}_{k=1}^{m}$ by randomly sampling $m$ inputs. 
For each layer $\ell$, head $h$, and calibration input $x^{k}$, we compute an attention density $d_{\ell,h}^{k}\in(0,1]$, defined as the minimum fraction of attention entries required to cover a proportion $\tau$ of the cumulative attention mass where we set $\tau=0.95$.
Specifically, let $\mathbf{A}_{\ell,h}^{k}\in\mathbb{R}^{n\times n}$ denote the post-softmax attention matrix of layer $\ell$ and head $h$ under input $x^{k}$. 
For each row $i$, we sort $\{\mathbf{A}_{\ell,h}^{k}(i,j)\}_{j=1}^{n}$ in descending order and define $\mathcal{S}_{\ell,h}^{(j)}(i)\subseteq\{1,\dots,n\}$ as the minimal prefix satisfying $\sum_{k\in\mathcal{S}_{\ell,h}^{(j)}(i)} A_{\ell,h}^{(j)}(i,k) \ge \tau$.
The attention density $d_{\ell,h}^{(j)}$ is then computed as follows:
\begin{equation}
    d_{\ell,h}^{(j)} = \frac{1}{n}\sum_{i=1}^{n}\frac{|\mathcal{S}_{\ell,h}^{(j)}(i)|}{n}.
\end{equation}
We then fit a univariate Gaussian $d_{\ell,h}^{(j)} \sim \mathcal{N}(\mu_{\ell,h},\sigma_{\ell,h}^{2})$ to $\{d_{\ell,h}^{(j)}\}_{j=1}^{m}$, and take the upper $\alpha$-quantile as a conservative estimate $\hat{d}_{\ell,h} = \mu_{\ell,h} + z_{\alpha}\sigma_{\ell,h}$, with $\alpha=0.95$.
Finally, we derive the sparsity schedule $s_{\ell,h}$ as follows:
\begin{equation}
s_{\ell,h} = 1 - \hat{d}_{\ell,h},
\end{equation}
which is used to guide the subsequent online attention sparsification.
Notably, the calibration set $\mathcal{D}$ can be arbitrary reasonable inputs due to the layer-wise stability of DiTs.

\subsection{Online Bidirectional Co-Clustering}
As analyzed in Section~\ref{sec:motivation}, existing sparse attention methods partition queries and keys independently, ignoring the intrinsic Q-K coupling that shapes the resulting attention map. Such Q-K decoupled blocking often yields suboptimal partitions, as tokens grouped within the same block may exhibit large cross-attention disparities, making subsequent block selection less reliable.
Ideally, queries within a block should share similar attention preferences, while keys within a block should exhibit similar relevance across queries.

\SetAlgoSkip{}
\setlength{\textfloatsep}{8pt plus 2pt minus 2pt}
\begin{algorithm}[t]
\caption{Bidirectional Co-Clustering Algorithm for the Query-Key Coupled Block Partitioning.}
\label{algorithm:co_clustering}
\SetKwInput{Input}{Input}
\SetKwInput{Output}{Output}

\Input{Query tokens $\mathcal{Q} = \{\mathbf{q}_i\}_{i=1}^N$, Key tokens $\mathcal{K} = \{\mathbf{k}_j\}_{j=1}^N$; Target number of blocks $K_q, K_k$; Max iterations $I_{\text{max}}$.}
\Output{Query block assignments $\mathcal{L}_q$; Key block assignments $\mathcal{L}_k$; Block centroids $\mathbf{C}_q, \mathbf{C}_k$.}

\BlankLine
$\mathbf{C}_q^{(0)} \leftarrow \text{Sample}(\mathcal{Q}, K_q)$; \quad $\mathbf{C}_k^{(0)} \leftarrow \text{Sample}(\mathcal{K}, K_k)$ \;

\For{$i = 1$ \KwTo $I_{\text{max}}$}{
    \tcpp{\stepstyle{Step A: Query-aware Key-side Block Partitioning}}
    $\mathbf{P}_k \leftarrow \mathcal{K} (\mathbf{C}_q^{(i-1)})^\top$;  $\bar{\mathbf{P}}_k \leftarrow \mathbf{C}_k^{(i-1)}(\mathbf{C}_q^{(i-1)})^\top$ \mycomment{Affinity to query anchors}
    $\mathbf{P}_k \leftarrow \text{Norm}(\mathbf{P}_k)$;  $\bar{\mathbf{P}}_k \leftarrow \text{Norm}(\bar{\mathbf{P}}_k)$ %
    
    $\mathcal{L}_k \leftarrow \arg\min_{j \in \{1,\dots,K_k\}} \|\mathbf{P}_k - \bar{\mathbf{P}}_k[j]\|_2$ \mycomment{Assign keys to blocks}
    $\mathbf{C}_k^{(i)} \leftarrow \text{Mean}(\mathcal{K} \text{ via } \mathcal{L}_k)$ \;

    \BlankLine
    \tcpp{\stepstyle{Step B: Key-aware Query-side Block Partitioning}}
    $\mathbf{P}_q \leftarrow \mathcal{Q} (\mathbf{C}_k^{(i)})^\top$; $\bar{\mathbf{P}}_q \leftarrow \mathbf{C}_q^{(i-1)}(\mathbf{C}_k^{(i)})^\top$ \mycomment{Affinity to key anchors}
    $\mathbf{P}_q \leftarrow \text{Norm}(\mathbf{P}_q)$; $\bar{\mathbf{P}}_q \leftarrow \text{Norm}(\bar{\mathbf{P}}_q)$ %
    
    $\mathcal{L}_q \leftarrow \arg\min_{j \in \{1,\dots,K_q\}} \|\mathbf{P}_q - \bar{\mathbf{P}}_q[j]\|_2$ \mycomment{Assign queries to blocks}
    $\mathbf{C}_q^{(i)} \leftarrow \text{Mean}(\mathcal{Q} \text{ via } \mathcal{L}_q)$ \;
}
\Return{$\mathcal{L}_q, \mathbf{C}_q, \mathcal{L}_k, \mathbf{C}_k$}
\end{algorithm}

\begin{table*}[!t]
  \centering
  \setlength{\tabcolsep}{0.3\tabcolsep}
  \caption{Quality and efficiency benchmarking results of our proposed \modelname\ and baselines on Text-to-Video Task.}
  \label{tab:t2v}
  \resizebox{\textwidth}{!}{%
    \begin{tabular}{c|c|c|ccccccc|cc}
    \toprule
    \multirow{2}{*}{Model} & \multirow{2}{*}{Config} & \multirow{2}{*}{Baseline} & \multicolumn{6}{c}{Quality} & & \multicolumn{2}{c}{Efficiency} \\
    \cline{4-12}
    & & & PSNR$\uparrow$ & SSIM$\uparrow$ & LPIPS$\downarrow$ & ImageQual$\uparrow$ & AesQual$\uparrow$ & SubConsist$\uparrow$ & BackConsist$\uparrow$ & Latency & Speedup \\
    \midrule
    \multirow{12}[8]{*}{Wan2.1} & \multicolumn{1}{c|}{\multirow{6}[4]{*}{1.3B-720P-T2V}} & Origin & -     & -     & -     & 66.58\% & 64.47\% & 96.74\% & 97.30\% & 417s  & 1.00$\times$ \\
\cline{3-12}          &       & SpargeAttn &  25.137     &   0.801    &   0.234    & 63.08\% & 62.75\% & 95.79\% & 97.09\% & 288s  & 1.45$\times$ \\
          &       & SVG1  & 25.712 & 0.811 & 0.230 & 63.26\% & 61.14\% & 94.27\% & 91.16\% & 266s  & 1.56$\times$ \\
          &       & SVG2  & \underline{29.268} & \underline{0.886} & \underline{0.127} & 61.83\% & 60.15\% & 95.88\% & 96.47\% & \underline{241s}  & \underline{1.73$\times$} \\
          &       & Radial &   26.305    &   0.829    &   0.182    & \underline{63.67\%} & \underline{62.76\%} & \underline{96.56\%} & \underline{97.07\%} & 257s  & 1.62$\times$ \\
           &       & \myc \textbf{\modelname} & \myc \textbf{29.986} & \myc \textbf{0.898} & \myc \textbf{0.125} & \myc \textbf{66.57\%} & \myc \textbf{64.45\%} & \myc \textbf{96.62\%} & \myc \textbf{97.19\%} & \mycb \textbf{216s} & \mycb \textbf{1.93$\times$} \\
\cline{2-12}          & \multicolumn{1}{c|}{\multirow{6}[4]{*}{14B-720P-T2V}} & Origin & -     & -     & -     & 69.14\% & 61.27\% & 97.64\% & 97.70\% & 1982s & 1.00$\times$ \\
\cline{3-12}          &       & SpargeAttn &  23.837     &   0.751    &   0.210    & 64.48\% & 57.91\% & 96.93\% & 97.25\% & 1394s & 1.42$\times$ \\
          &       & SVG1  & 23.957 & 0.804 & 0.194 & 67.80\% & 58.95\% & 97.09\% & 97.12\% & \underline{1239s} & \underline{1.60$\times$} \\
              &       & SVG2  & \underline{27.342} & \underline{0.892} & \textbf{0.111} & \underline{68.29\%} & 59.06\% & 97.24\% & 97.07\% & 1261s & 1.57$\times$ \\
          &       & Radial &    23.358   &   0.798    &   0.206    & 67.66\% & \underline{60.83\%} & \underline{97.55\%} & \underline{97.49\%} & 1297s & 1.53$\times$ \\
          &       & \myc \textbf{\modelname} & \myc \textbf{27.786} & \myc \textbf{0.893} & \myc \textbf{0.111} & \myc \textbf{68.92\%} & \myc \textbf{61.01\%} & \myc \textbf{97.67\%} & \myc \textbf{97.69\%} & \mycb \textbf{1203s} & \mycb \textbf{1.64$\times$} \\
    \midrule
    \multirow{6}[4]{*}{Wan2.2} & \multicolumn{1}{c|}{\multirow{6}[4]{*}{14B-720P-T2V}} & Origin & -     & -     & -     &   72.62\%    &  65.21\%     &   97.24\%    &   97.41\%    & 1608s & 1.00$\times$ \\
\cline{3-12}          &       & SpargeAttn & 19.638      &   0.697    &   0.288    & 70.77\% & 63.29\% & 96.18\% & \underline{96.68\%} & 1116s & 1.44$\times$ \\
          &       & SVG1  & 21.293 & 0.798 & 0.210 & \underline{71.84\%} & \underline{65.13\%} & 96.16\% & 96.67\% & 1049s & 1.53$\times$ \\
          &       & SVG2  & \underline{24.477} & \underline{0.856} & \underline{0.142} & 71.51\% & 62.17\% & \underline{96.22\%} & 96.45\% & \underline{1061s} & \underline{1.52$\times$} \\
          &       & Radial &   20.452    &   0.704    &   0.270    & 71.47\% & 64.76\% & 95.78\% & 96.22\% & 1164s & 1.38$\times$ \\
          &       & \myc \textbf{\modelname} & \myc \textbf{24.846} & \myc \textbf{0.860} & \myc \textbf{0.144} & \myc \textbf{72.92\%} & \myc \textbf{65.16\%} & \myc \textbf{96.72\%} & \myc \textbf{97.01\%} & \mycb \textbf{984s} & \mycb \textbf{1.63$\times$} \\
    \midrule
    \multirow{6}[4]{*}{Hunyuan} & \multicolumn{1}{c|}{\multirow{6}[4]{*}{13B-720P-T2V}} & Origin & -     & -     & -     & 67.56\% & 57.28\% & 97.67\% & 97.76\% & 1783s & 1.00$\times$ \\
\cline{3-12}          &       & SpargeAttn &   22.394    &   0.770    &   0.236    & 66.95\% & 55.22\% & 96.49\% & 96.64\% & 1294s & 1.38$\times$ \\
          &       & SVG1  &   21.979    &    0.752   &   0.259    & \underline{67.04\%} & 55.56\% & 96.47\% & 96.51\% & \underline{897s}  & \underline{1.99$\times$} \\
          &       & SVG2  &   \textbf{25.218}    &   \underline{0.841}    &    \textbf{0.205}   & 66.90\% & 55.31\% & 96.78\% & 96.49\% & 909s  & 1.96$\times$ \\
          &       & Radial &   24.319    &   0.805    &    \underline{0.219}   & 66.33\% & \textbf{56.74\%} & \underline{97.04\%} & \underline{96.72\%} & 916s  & 1.94$\times$ \\
          &       & \myc \textbf{\modelname} &   \myc \underline{24.879}    &   \myc \textbf{0.843}   &   \myc 0.224    & \myc \textbf{67.93\%} & \myc \underline{55.80\%} & \myc \textbf{97.99\%} & \myc \textbf{97.50\%} & \mycb \textbf{821s} & \mycb \textbf{2.17$\times$} \\
    \bottomrule
    \end{tabular}%
  }
\vspace{-0.5em}
\end{table*}

To this end, we design a novel bidirectional co-clustering algorithm that explicitly accounts for Q-K coupling during block partitioning, with its details provided in Algorithm~\ref{algorithm:co_clustering}.
Specifically, the proposed bidirectional co-clustering algorithm alternates between query-aware key partitioning and key-aware query partitioning to jointly align query and key blocks. Given query tokens $\mathcal{Q}=\{\mathbf{q}_i\}_{i=1}^{N}$ and key tokens $\mathcal{K}=\{\mathbf{k}_j\}_{j=1}^{N}$, we first initialize the query and key block centroids $\mathbf{C}_q^{(0)}$ and $\mathbf{C}_k^{(0)}$ by randomly sampling anchor tokens.
At each iteration, we first perform \emph{query-aware key-side clustering}. For each key token, we compute its affinity vector $\mathbf{P}_k$ with respect to the current query centroids $\mathbf{C}_q^{(i-1)}$, which characterizes how the key is attended by different query blocks. Keys are then assigned to the nearest key centroid by comparing these affinity patterns, yielding updated key block assignments $\mathcal{L}_k$ and centroids $\mathbf{C}_k^{(i)}$. This step groups together keys that exhibit similar relevance across queries.
Next, we perform \emph{key-aware query-side clustering} in a symmetric manner. 
By iteratively alternating between these two steps, the algorithm jointly refines query and key partitions in a coupling-aware fashion. As a result, queries within the same block share similar attention preferences, while keys within the same block exhibit similar relevance across queries, producing well-aligned query-key blocks.

\textbf{Top Block-Pair Selection}.
We perform the block-wise selection via coarse-grained estimation $\mathbf{\bar{A}}=\mathbf{C}_{q}\mathbf{C}_{k}^{\top}$. 
To balance the recall $\tau$ and offline budget $s_{l,h}$, the selection ratio $\rho_{l,h}$ is determined by a threshold-dependent strategy:
\begin{equation}
\rho_{l,h} = 
\begin{cases} 
\min(\text{Recall}(\mathbf{\bar{A}}, \tau), s_{l,h}), & \text{if } s_{l,h} > \theta \\
\max(\text{Recall}(\mathbf{\bar{A}}, \tau), s_{l,h}), & \text{if } s_{l,h} \le \theta
\end{cases}
\end{equation}
Dense attention is computed only over the top $\rho_{l,h} K_k$ blocks

\textbf{Clustering Reuse.}
While bidirectional co-clustering incurs extra computation, we find that our resulting block partitions are highly stable across diffusion steps. We therefore reuse the clustering results and recompute them every $N$ steps.

\textbf{Kernel Customization.}
We implement the bidirectional co-clustering algorithm using Triton, and adopt dynamic block-size FlashInfer kernels from prior work~\cite{yang2025sparse,ye2025flashinfer} for block-wise sparse attention computation.

\textls[-10]{\textbf{Difference from SVG2}. 
SVG2~\cite{yang2025sparse} independently partitions queries and keys into blocks using K-means, 
ignoring that the optimal block partitioning of queries varies with keys, and vice versa.
In contrast, we account for Q-K coupling by jointly partitioning queries and keys via novel bidirectional co-clustering, yielding better-aligned blocks with similar attention preferences within each block.}

\section{Experiment}\label{sec:exp}
\begin{table*}[!t]
  \centering
  \setlength{\tabcolsep}{0.3\tabcolsep}
  \caption{Quality and efficiency benchmarking results of our proposed \modelname\ and baselines on Image-to-Video Task.}
  \label{tab:i2v}
  \resizebox{\textwidth}{!}{%
    \begin{tabular}{c|c|c|ccccccc|cc}
    \toprule
    \multirow{2}{*}{Model} & \multirow{2}{*}{Config} & \multirow{2}{*}{Baseline} & \multicolumn{6}{c}{Quality} & & \multicolumn{2}{c}{Efficiency} \\
\cline{4-12}          &        &        & PSNR$\uparrow$ & SSIM$\uparrow$ & LPIPS$\downarrow$ & ImageQual$\uparrow$ & AesQual$\uparrow$ & SubConsist$\uparrow$ & BackConsist$\uparrow$ & Latency & Speedup \\
    \midrule
    \multirow{6}[4]{*}{Wan2.1} & \multicolumn{1}{c|}{\multirow{6}[4]{*}{14B-720P-I2V}} & Origin & -     & -     & -     &   71.52\%    &   61.86\%    &   94.98\%    &    95.37\%   & 1658s & 1.00$\times$ \\
\cline{3-12}          &       & SpargeAttn &  21.557     &   0.691   &   0.297   & 70.83\% & \textbf{61.77\%} & 95.81\% & 95.04\% & 1124s & 1.48$\times$ \\
          &       & SVG1  &    24.262   &   0.825   &    0.174  & \underline{70.93\%} & 61.02\% & 94.42\% & 94.51\% & 1047s & 1.58$\times$ \\
          &       & SVG2  &  \underline{27.324}    &   \underline{0.856}  &   \underline{0.125}    & 70.76\% & 61.45\% & \textbf{95.72\%} & 94.93\% & \underline{998s}  & \underline{1.66$\times$} \\
          &       & Radial &    23.673   &   0.759    &   0.189    & 70.91\% & 61.62\% & 94.52\% & \underline{95.36\%} & 1046s & 1.58$\times$ \\
          &       & \myc \textbf{\modelname} &   \myc \textbf{27.545}    &    \myc \textbf{0.878}   &  \myc \textbf{0.121}    & \myc \textbf{71.71\%} & \myc \underline{61.67\%} & \myc \underline{94.80\%} & \myc \textbf{95.39\%} & \mycb \textbf{954s} & \mycb \textbf{1.74$\times$} \\
    \midrule
    \multirow{6}[4]{*}{Wan2.2} & \multicolumn{1}{c|}{\multirow{6}[4]{*}{14B-720P-I2V}} & Origin & -     & -     & -     &   74.36\%    &   66.23\%    &   97.55\%    &   97.24\%   & 1605s & 1.00$\times$ \\
\cline{3-12}          &       & SpargeAttn &  25.935     &   0.832   &    0.139   & 72.61\% & 62.29\% & 97.03\% & \textbf{97.09\%} & 1119s & 1.42$\times$ \\
          &       & SVG1  &   26.882    &   0.866    &   0.131    & \underline{72.88\%} & 62.05\% & \underline{97.17\%} & \underline{97.08\%} & \underline{1034s} & \underline{1.55$\times$} \\
          &       & SVG2  &   \underline{28.384}    &    \underline{0.893}   &   \underline{0.106}    & 71.28\% & 62.36\% & 96.79\% & 96.65\% & 1057s & 1.52$\times$ \\
          &       & Radial &  25.080     &   0.797    &   0.156    & 71.19\% & \underline{63.73\%} & 95.91\% & 96.49\% & 1157s & 1.39$\times$ \\
          &       & \myc \textbf{\modelname} &   \myc \textbf{29.678}    &    \myc \textbf{0.913}   &   \myc \textbf{0.095}    & \myc \textbf{73.37\%} & \myc \textbf{63.76\%} & \myc \textbf{97.31\%} & \myc 96.97\% & \mycb \textbf{994s} & \mycb \textbf{1.61$\times$} \\
    \midrule
    \multirow{6}[4]{*}{Hunyuan} & \multicolumn{1}{c|}{\multirow{6}[4]{*}{13B-720P-I2V}} & Origin & -     & -     & -     &   70.30\%    &  62.06\%     &   96.55\%    &   96.54\%    & 1761s & 1.00$\times$ \\
\cline{3-12}          &       & SpargeAttn &  22.908     &   0.717    &   0.259    & 68.53\% & 60.99\% & 96.22\% & 95.37\% & 1287s & 1.37$\times$ \\
          &       & SVG1  &    23.437   &    0.729   &    0.236   & 67.04\% & 55.57\% & \underline{96.51\%} & 95.47\% & \underline{887s}  & \underline{1.98$\times$} \\
          &       & SVG2  &   \underline{24.947}    &    \underline{0.761}   &   \underline{0.220}    & 68.45\% & \underline{59.75\%} & 95.61\% & \underline{95.68\%} & 889s  & 1.98$\times$ \\
          &       & Radial &   23.463    &   0.720    &    0.258   & \underline{69.52\%} & \textbf{61.66\%} & 95.30\% & 95.62\% & 912s  & 1.93$\times$ \\
          &       & \myc \textbf{\modelname} &   \myc \textbf{25.155}    &   \myc \textbf{0.759}    &   \myc \textbf{0.200}    & \myc \textbf{69.70\%} & \myc 59.68\% & \myc \textbf{96.79\%} & \myc \textbf{95.70\%} & \mycb \textbf{810s} & \mycb \textbf{2.17$\times$} \\
    \bottomrule
    \end{tabular}%
  }
\vspace{-0.5em}
\end{table*}

\subsection{Setup}
\textbf{Models.}
We evaluate \modelname\ on 7 widely used video generation models, covering both text-to-video and image-to-video tasks. Specifically, the text-to-video models include Wan2.1-1.3B-T2V, Wan2.1-14B-T2V, Wan2.2-A14B-T2V~\cite{wan2025wan}, and HunyuanVideo-T2V~\cite{team2025hunyuanworld}, while image-to-video models include Wan2.1-14B-I2V, Wan2.2-A14B-I2V, and HunyuanVideo-I2V.

\textbf{Metrics.}
We evaluate both the quality and efficiency of \modelname\ and the baselines. 
For quality assessment, we use Peak Signal-to-Noise Ratio (PSNR), Learned Perceptual Image Patch Similarity (LPIPS)~\cite{zhang2018unreasonable}, and Structural Similarity Index Measure (SSIM)~\cite{wang2004image} to measure the similarity between videos generated with sparse and dense attention. In addition, we adopt the VBench score~\cite{huang2024vbench} to evaluate video generation quality, reporting metrics on image quality, aesthetic quality, subject consistency, and background consistency.
For efficient assessment, we report the inference latency and the overall speedup achieved under the same settings.

\textbf{Datasets.}
For text-to-video generation task, we follow the Penguin Benchmark with prompt optimization provided by the VBench team~\cite{huang2024vbench}, while for image-to-video generation task, we use the prompt-image pairs from VBench++~\cite{zheng2025vbench} with a 16:9 aspect ratio.

\textbf{Baselines.}
We compare \modelname\ with state-of-the-art training-free sparse attention methods tailored for accelerated video generation, including SpargeAttention~\cite{zhang2025spargeattn}, SVG~\cite{xi2025sparse}, SVG2~\cite{yang2025sparse}, and Radial~\cite{yang2025sparse}. We use official configurations from their open-sourced repositories, except for unified warm-up.

\textbf{Implementations.}
We set the number of query and key blocks to $K_{q}=256$ and $K_{k}=1024$, respectively. 
The bidirectional co-clustering algorithm is run with only two iterations per clustering and recomputed every 20 diffusion steps. 
The threshold is set to $\theta=0.1$. 
For all models, we apply a layer warm-up of one layer, where the first layer always uses the dense attention. 
In addition, Wan-series models use 20\% dense attention warm-up diffusion steps, while HunyuanVideo-series models use a 10\% warm-up.
For our proposed \modelname, SpargeAttention, SVG, and SVG2, videos are generated at a standard 720p resolution (720 $\times$ 1280). Specifically, Wan-series models generate 81-frame videos, while HunyuanVideo-series models generate 129-frame videos.
For Radial, due to constraints imposed by its acceleration strategy, we follow its specification and use a resolution of 
704 $\times$ 1280, with 85 frames for Wan-series models and 133 frames for HunyuanVideo-series models.
All experiments are conducted on the NVIDIA H200 GPU.

\begin{figure}[!t]
\centering
\includegraphics[width=\linewidth]{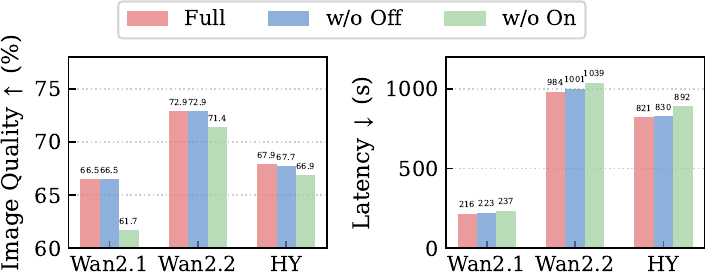}
\vspace{-1.0em}
\caption{The ablation study of our proposed \modelname.}\label{fig:exp_ablation}
\vspace{-0.5em}
\end{figure}

\subsection{Quality and Efficiency Evaluation}
We evaluate the quality and efficiency of \modelname\ and the baselines on both text-to-video and image-to-video generation tasks, with results reported in Tables~\ref{tab:t2v} and~\ref{tab:i2v}, respectively. 
Overall, \modelname\ consistently achieves the highest speedup among all methods while maintaining better generation quality in most settings. 
For Wan2.1-1.3B-T2V, \modelname\ attains a $1.93\times$ speedup, outperforming the runner-up SVG2 by $0.20\times$, and improves image quality and aesthetic quality by 4.74\% and 4.30\%, respectively.
Moreover, we provide qualitative visual comparisons of videos generated by \modelname\ in Figure~\ref{fig:exp_example}, which further corroborate the quantitative results and demonstrate that our proposed \modelname\ still maintains great generation quality under high inference acceleration.
\begin{figure*}[!t]
\centering
\includegraphics[width=\linewidth]{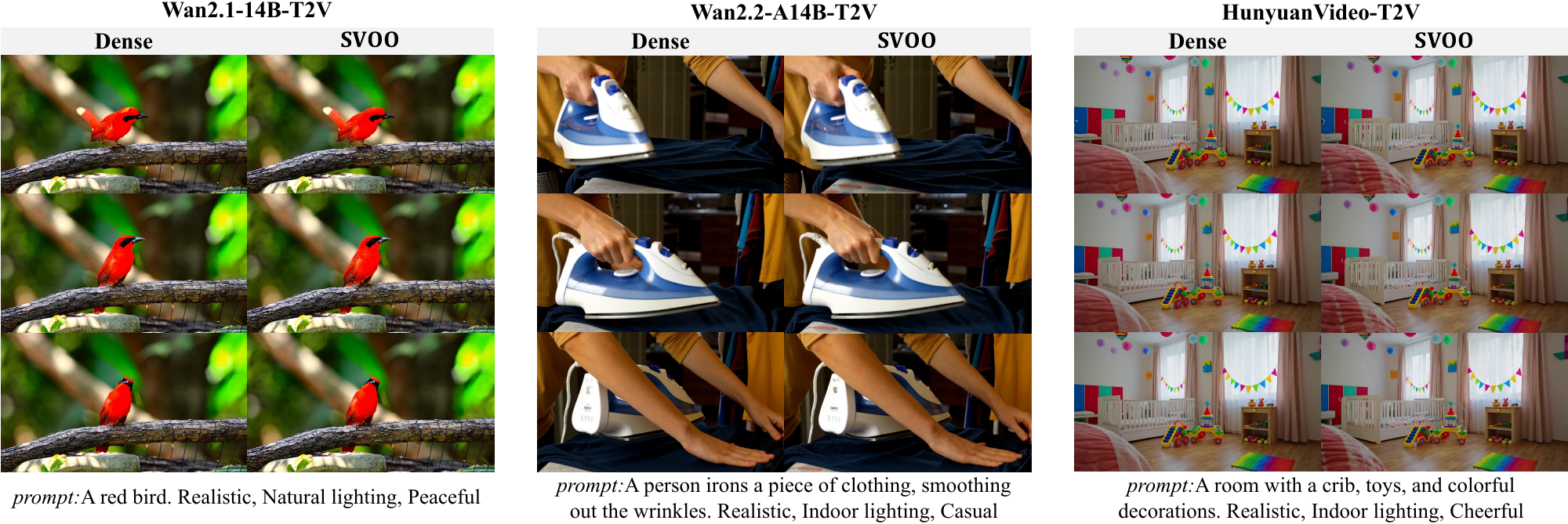}
\vspace{-1.5em}
\caption{Examples of videos generated by our proposed \modelname\ and dense attention on Wan and HunyuanVideo models.}\label{fig:exp_example}
\vspace{-1.5em}
\end{figure*}

\subsection{Ablation Study}
To assess the contributions of the two mechanisms of \modelname\ to the quality-efficiency trade-off,
we perform ablation studies on text-to-video generation using Wan2.1-1.3B, Wan2.2-14B, and HunyuanVideo (HY). We consider two variants of \modelname:
(1) \modelname\ (w/o Off), which removes offline profiling and uses a fixed recall threshold of $\tau=90\%$;
(2) \modelname\ (w/o On), which removes bidirectional co-clustering and adopts independent clustering.
As shown in Figure~\ref{fig:exp_ablation}, removing the offline stage reduces efficiency, while removing the online stage degrades quality, demonstrating that offline profiling enables safe acceleration and online co-clustering yields better-aligned blocks
with just minor extra overhead.

\begin{figure}[!t]
\centering
\includegraphics[width=\linewidth]{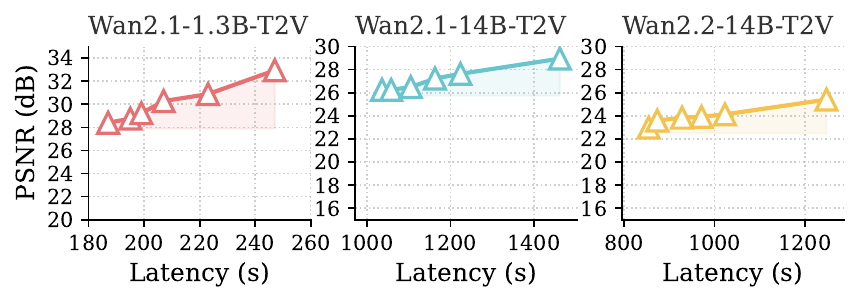}
\vspace{-1.8em}
\caption{Quality-efficiency trade-off of \modelname.}\label{fig:tradeoff}
\vspace{-0.5em}
\end{figure}

\subsection{Quality-Efficiency Trade-off Study}
We study the quality-efficiency trade-off of our proposed \modelname\ under different sparsity settings on a random subset of VBench. As shown in Figure~\ref{fig:tradeoff}, although the generation quality gradually decreases as sparsity increases and higher speedup is achieved, \modelname\ consistently maintains strong performance across a wide range of sparsity levels, demonstrating its robustness under aggressive acceleration.

\subsection{Clustering Result Reuse Study}

We analyze our co-clustering results to justify its reuse during inference. As shown in Figure~\ref{fig:reuse}, the mutual-information similarity of clustering results remains high across diffusion steps, indicating stable block partitions. Such stability enables reuse with recomputation every few steps and negligible quality loss, while substantially reducing the overhead.

\begin{figure}[!t]
\centering
\includegraphics[width=\linewidth]{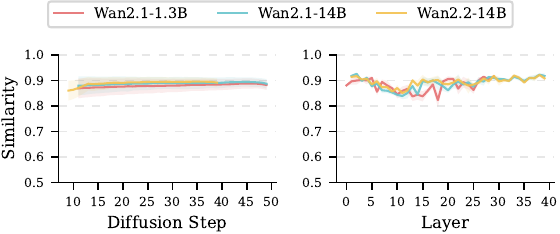}
\vspace{-1.5em}
\caption{Analysis of clustering result across diffusion steps.}\label{fig:reuse}
\vspace{-0.5em}
\end{figure}

\begin{figure}[!t]
\centering
\includegraphics[width=\linewidth]{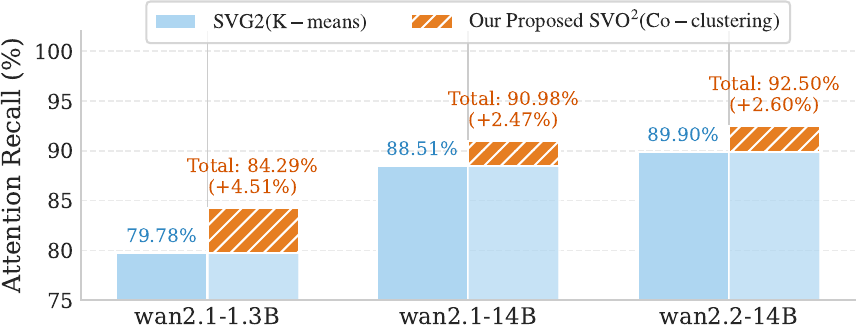}
\caption{Attention recall comparison between \modelname\ and SVG2.}\label{fig:recall}
\end{figure}

\subsection{Clustering Strategy Results Compare to SVG2}\label{sec:clustering_compare}

To analyze the importance of Q-K coupled block partitioning, we compare bidirectional co-clustering with the K-means method used in SVG2. We evaluate effectiveness using attention recall (details in Appendix~\ref{appendix:exp_details}). As shown in Figure~\ref{fig:recall}, our proposed bidirectional co-clustering consistently achieves higher recall, demonstrating the benefit of Q-K coupling over the independent Q-K block partitioning.

Further analyses of reuse steps, block numbers, and offline-stage computational overhead are provided in Appendix~\ref{appendix:exp_details}.

\section{Conclusion}

We presented \modelname, a novel training-free sparse attention framework tailored for fast diffusion-based video generation that explicitly addresses two key limitations of prior methods: ignoring layer-wise heterogeneity in attention pruning and ignoring Q-K coupling in block partitioning. 
Our proposed \modelname\ address these limitations via a two-stage paradigm: (1) profiling layer-wise pruning tolerance offline to derive a reliable sparsity schedule, and (2) performing online bidirectional co-clustering to construct Q-K coupled, well-aligned blocks for more effective block-wise sparse attention. 
Extensive experiments on seven widely used video generation models demonstrate that our proposed \modelname\ consistently achieves a superior quality-efficiency trade-off.

\section*{Impact Statement}
The goal of this paper is to accelerate Diffusion Transformer-based video generation without requiring retraining. While this work may have various potential societal implications, we do not identify any that warrant specific discussion here.

\section*{Acknowledgements}
The corresponding authors are Jianxin Li and Zhibo Chen. This work was supported by the National Natural Science Foundation of China under Grant No. 62225202 and Grant No. 62302023, and the Zhongguancun Academy Project under Grant No. C20250302.

\nocite{langley00}

\bibliography{reference}
\bibliographystyle{icml2026}

\newpage
\appendix
\onecolumn
\section{Proof.}\label{appendix:proof}

\subsection{Proof of Theorem~\ref{theorem:layer_stability}}
Here we first restate Theorem~\ref{theorem:layer_stability}:

\begin{theorem}[Layer-wise Sparsity Stability]
Consider a well-trained transformer layer and denote by $V(\mathbf{X})$ the average row-wise variance of the pre-softmax attention logits produced by this layer for input $\mathbf{X}\in\mathbb{R}^{n\times d}$.
Under Assumption~\ref{assump:mean_bound}, for any two independent inputs
$\mathbf{X},\mathbf{\hat{X}}\in\mathbb{R}^{n\times d}$  of equal token length, it holds with probability at least $1-\delta$:
\begin{equation}
\begin{aligned}
&\big|V(\mathbf{X})-V(\mathbf{\hat{X}})\big|
\;\\
\le\;&
\frac{d\,\|\mathbf{M}\|_2^{2}}{d'}\,
C\,R^{4}
\left(
\sqrt{\frac{\log(d/\delta)}{n}}
+
\frac{\log(d/\delta)}{n}
\right),
\end{aligned}
\end{equation}
where $C>0$ is an absolute constant, $\mathbf{M}\triangleq \mathbf{W}_{\mathrm Q}\mathbf{W}_{\mathrm K}^{\top}$ with $\mathbf{W}_{\mathrm Q},\mathbf{W}_{\mathrm K}\in\mathbb{R}^{d\times d'}$
are the query and key projection matrices of this layer, and $d'$ is the attention head dimension.
\end{theorem}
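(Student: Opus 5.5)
The plan is to express $V(\mathbf{X})$ as a smooth function of the empirical first and second moments of the token distribution. Then concentrate both empirical moments around their population values $\boldsymbol{\mu}_\star,\boldsymbol{\Sigma}_\star$, which are shared by $\mathbf{X}$ and $\hat{\mathbf{X}}$. Finally apply the triangle inequality $|V(\mathbf{X})-V(\hat{\mathbf{X}})|\le|V(\mathbf{X})-\bar V|+|V(\hat{\mathbf{X}})-\bar V|$, where $\bar V$ is the population analogue.

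\emph{Step 1: rewrite $V$.} Write $\mathbf{z}_i(\mathbf{X})=\mathbf{x}_i\mathbf{M}\mathbf{X}^\top/\sqrt{d'}$. Its entries are $\mathbf{x}_i\mathbf{M}\mathbf{x}_j^\top/\sqrt{d'}$, so the row-wise variance over $j$ is
\[
\mathrm{Var}(\mathbf{z}_i)=\frac{1}{d'}\,\mathbf{x}_i\mathbf{M}\hat{\boldsymbol{\Sigma}}\mathbf{M}^\top\mathbf{x}_i^\top,
\]
where $\hat{\boldsymbol{\Sigma}}=\frac1n\sum_j(\mathbf{x}_j-\hat{\boldsymbol{\mu}})^\top(\mathbf{x}_j-\hat{\boldsymbol{\mu}})$ is the empirical covariance. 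Averaging over $i$ and using $\frac1n\sum_i\mathbf{x}_i^\top\mathbf{x}_i=\hat{\mathbf{S}}$, the empirical second moment, gives
\[
V(\mathbf{X})=\frac{1}{d'}\mathrm{tr}\big(\mathbf{M}\hat{\boldsymbol{\Sigma}}\mathbf{M}^\top\hat{\mathbf{S}}\big).
\]
The population analogue is $\bar V=\frac1{d'}\mathrm{tr}(\mathbf{M}\boldsymbol{\Sigma}_\star\mathbf{M}^\top\mathbf{S}_\star)$ with $\mathbf{S}_\star=\boldsymbol{\Sigma}_\star+\boldsymbol{\mu}_\star^\top\boldsymbol{\mu}_\star$.

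\emph{Step 2: perturbation.} Telescope the difference:
\[
\mathrm{tr}(\mathbf{M}\hat{\boldsymbol\Sigma}\mathbf{M}^\top\hat{\mathbf S})-\mathrm{tr}(\mathbf{M}\boldsymbol\Sigma_\star\mathbf{M}^\top\mathbf S_\star)
=\mathrm{tr}\big(\mathbf{M}(\hat{\boldsymbol\Sigma}-\boldsymbol\Sigma_\star)\mathbf{M}^\top\hat{\mathbf S}\big)+\mathrm{tr}\big(\mathbf{M}\boldsymbol\Sigma_\star\mathbf{M}^\top(\hat{\mathbf S}-\mathbf S_\star)\big).
\]
Bound each term by $|\mathrm{tr}(\mathbf{AB})|\le d\|\mathbf{A}\|_2\|\mathbf{B}\|_2$, which is where the factor $d$ comes from. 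By Assumption~\ref{assump:mean_bound}, $\|\hat{\mathbf S}\|_2,\|\boldsymbol\Sigma_\star\|_2\le R^2$. This yields
\[
|V(\mathbf{X})-\bar V|\le\frac{d\|\mathbf M\|_2^2R^2}{d'}\big(\|\hat{\boldsymbol\Sigma}-\boldsymbol\Sigma_\star\|_2+\|\hat{\mathbf S}-\mathbf S_\star\|_2\big).
\]

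\emph{Step 3: concentration.} This is the main step. Apply matrix Bernstein to the i.i.d.\ bounded self-adjoint summands $\mathbf{x}_j^\top\mathbf{x}_j-\mathbf{S}_\star$. Each summand has norm at most $2R^2$ and variance proxy at most $R^4$, so with probability $1-\delta/4$,
\[
\|\hat{\mathbf S}-\mathbf S_\star\|_2\le CR^2\Big(\sqrt{\log(d/\delta)/n}+\log(d/\delta)/n\Big).
\]
For $\hat{\boldsymbol\Sigma}$, use $\hat{\boldsymbol\Sigma}=\hat{\mathbf S}-\hat{\boldsymbol\mu}^\top\hat{\boldsymbol\mu}$. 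Handle the mean term with vector Bernstein: $\|\hat{\boldsymbol\mu}-\boldsymbol\mu_\star\|_2\lesssim R\sqrt{\log(d/\delta)/n}$. Then
\[
\|\hat{\boldsymbol\mu}^\top\hat{\boldsymbol\mu}-\boldsymbol\mu_\star^\top\boldsymbol\mu_\star\|_2\le 2R\|\hat{\boldsymbol\mu}-\boldsymbol\mu_\star\|_2,
\]
which is of the same order. The delicate points are tracking constants and confirming that the independence of the rows of $\mathbf{X}$ required by Assumption~\ref{assump:mean_bound} holds. If needed, I would read the assumption as i.i.d.\ sampling.

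\emph{Step 4: combine.} Repeat Steps 2 and 3 for $\hat{\mathbf X}$ and take a union bound over the four events, absorbing the constants into $C$. This gives the claimed bound with total factor $\frac{d\|\mathbf M\|_2^2}{d'}CR^4(\cdots)$, holding with probability at least $1-\delta$.
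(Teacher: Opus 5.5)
Your proposal is correct and is essentially the paper's argument: the same closed form $V(\mathbf X)=\frac{1}{d'}\mathrm{tr}\big(\mathbf M\boldsymbol\Sigma\mathbf M^\top(\boldsymbol\Sigma+\boldsymbol\mu^\top\boldsymbol\mu)\big)$, a telescoping perturbation bounded by a trace inequality that produces the factor $d\|\mathbf M\|_2^2$, matrix Bernstein together with a vector concentration bound for the empirical moments, and a union bound. The differences are only organizational: you route through the population value $\bar V$ and concentrate the uncentered second moment, whereas the paper telescopes $V(\mathbf X)-V(\widehat{\mathbf X})$ directly and concentrates the oracle-centered covariance. Also, by keeping $\mathbf M(\cdot)\mathbf M^\top$ intact you avoid the paper's rewriting of the trace as $\mathrm{tr}\big(\mathbf M^\top\mathbf M\,\boldsymbol\Sigma(\boldsymbol\Sigma+\boldsymbol\mu^\top\boldsymbol\mu)\big)$, which is not a valid cyclic permutation, although that slip does not change the form of the final bound.
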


\begin{proof}

\textbf{Attention Calculation.}
Consider a transformer layer with input tensor $\mathbf{X}\in \mathbb{R}^{n\times d}$, where $n$ denotes the number of tokens and $d$ the hidden dimension.
Let $\mathbf{W}_{\text{Q}}, \mathbf{W}_{\text{K}}\in\mathbb{R}^{d\times d^{\prime}}$ be the projection matrices for queries and keys, respectively. 
The projected features are:
\begin{equation}
\mathbf{Q}=\mathbf{X}\mathbf{W}_{\text{Q}}\in \mathbb{R}^{n\times d^{\prime}}, \mathbf{K}=\mathbf{X}\mathbf{W}_{\text{K}}\in \mathbb{R}^{n\times d^{\prime}}.
\end{equation}
The unnormalized scaled dot-product attention score matrix is then given by:
\begin{equation}
\mathbf{A}=\frac{1}{\sqrt{d^{\prime}}}\mathbf{Q}\mathbf{K}^{\top}=\frac{1}{\sqrt{d^{\prime}}}\mathbf{X}\mathbf{W}_{\text{Q}}\mathbf{W}_{\text{K}}^{\top}\mathbf{X}^{\top},
\end{equation}
where the second equality follows by substituting the definitions of $\mathbf{Q}$ and $\mathbf{K}$ and rearranging the matrix products.

\textbf{Variance Calculation.}
For the $i$-th row of the attention matrix $\mathbf{A}$, we have:
\begin{equation}
    \mathbf{A}_{i,:}=\frac{1}{\sqrt{d^{\prime}}}\mathbf{X}_{i,:}\mathbf{W}_{\text{Q}}\mathbf{W}_{\text{K}}^{\top}\mathbf{X}^{\top},
\end{equation}
Let $\mu_{i}$ denote the mean of the $i$-th attention row, defined as:
\begin{equation}
    \mu_{i}=\frac{1}{n}\sum_{j=1}^{n}\mathbf{A}_{i,j}
\end{equation}
The variance of $\mathbf{A}_{i,:}$ the given by:
\begin{equation}
\begin{aligned}\label{eq_appendix:var_attn_1}
\operatorname{Var}(\mathbf{A}_{i,:})&=\frac{1}{n}\sum_{j=1}^{n}(\mathbf{A}_{i,:}-\mu_{i})^{2} \\
&=\frac{1}{n}\sum_{j=1}^{n}(\mathbf{A}_{i,j}^{2}-2\mu_{i}\mathbf{A}_{i,j}+\mu_{i}^{2}) \\
&=\frac{1}{n}\sum_{j=1}^{n}\mathbf{A}_{i,j}^{2}-2\mu_{i}(\frac{1}{n}\sum_{j=1}^{n}\mathbf{A}_{i,j}) +\frac{1}{n}\sum_{j=1}^{n}\mu_{i}^{2} \\
&=\frac{1}{n}\sum_{j=1}^{n}\mathbf{A}_{i,j}^{2}-2\mu_{i}^{2}+\mu_{i}^{2} \\
&=\frac{1}{n}\sum_{j=1}^{n}\mathbf{A}_{i,j}^{2}-\mu_{i}^{2}
\end{aligned}
\end{equation}
We first compute the first term $\sum_{j=1}^{n}\mathbf{A}_{i,j}^{2}$. By definition,
\begin{equation}
\sum_{j=1}^{n}\mathbf{A}_{i,j}^{2}
=\lVert \mathbf{A}_{i,:}\rVert_{2}^{2}
=\mathbf{A}_{i,:}\mathbf{A}_{i,:}^{\top}.
\end{equation}
Substituting $\mathbf{A}_{i,:}=\frac{1}{\sqrt{d'}}\mathbf{X}_{i,:}\mathbf{W}_{\mathrm{Q}}\mathbf{W}_{\mathrm{K}}^{\top}\mathbf{X}^{\top}$ yields
\begin{equation}
\mathbf{A}_{i,:}\mathbf{A}_{i,:}^{\top}
=\frac{1}{d'}\,
\mathbf{X}_{i,:}\mathbf{W}_{\mathrm{Q}}\mathbf{W}_{\mathrm{K}}^{\top}
\mathbf{X}^{\top}\mathbf{X}
\mathbf{W}_{\mathrm{K}}\mathbf{W}_{\mathrm{Q}}^{\top}
\mathbf{X}_{i,:}^{\top}.
\end{equation}
For notational simplicity, define
\begin{equation}
\mathbf{M} \triangleq \mathbf{W}_{\mathrm{Q}}\mathbf{W}_{\mathrm{K}}^{\top}.
\end{equation}
Then we can write
\begin{equation}\label{eq_appendix:second_item}
\sum_{j=1}^{n}\mathbf{A}_{i,j}^{2}
=\frac{1}{d'}\,
\mathbf{X}_{i,:}\mathbf{M}\,
\mathbf{X}^{\top}\mathbf{X}\,
\mathbf{M}^{\top}\mathbf{X}_{i,:}^{\top}.
\end{equation}
Next, we compute the second term $\mu_i^{2}$. By definition, the mean of the $i$-th attention row is given by
\begin{equation}
\begin{aligned}\label{eq_appendix:first_item}
\mu_i
&=\frac{1}{n}\sum_{j=1}^{n}\mathbf{A}_{i,j} \\
&=\frac{1}{n}\sum_{j=1}^{n}\frac{1}{\sqrt{d'}}
\,\mathbf{X}_{i,:}\mathbf{W}_{\mathrm{Q}}\mathbf{W}_{\mathrm{K}}^{\top}\mathbf{X}_{j,:}^{\top} \\
&=\frac{1}{\sqrt{d'}}\,
\mathbf{X}_{i,:}\mathbf{W}_{\mathrm{Q}}\mathbf{W}_{\mathrm{K}}^{\top}
\left(\frac{1}{n}\sum_{j=1}^{n}\mathbf{X}_{j,:}^{\top}\right) \\
&=\frac{1}{\sqrt{d'}}\,
\mathbf{X}_{i,:}\mathbf{W}_{\mathrm{Q}}\mathbf{W}_{\mathrm{K}}^{\top}
\boldsymbol{\mu}^{\top}, \\
&=\frac{1}{\sqrt{d'}}\,
\mathbf{X}_{i,:}\mathbf{M}
\boldsymbol{\mu}^{\top}
\end{aligned}
\end{equation}
where $\boldsymbol{\mu}\in\mathbb{R}^{1\times d}$ denotes the mean token representation,
\begin{equation}
\boldsymbol{\mu}=\frac{1}{n}\mathbf{1}^{\top}\mathbf{X}.
\end{equation}
Thus, combining Eq.~\eqref{eq_appendix:var_attn_1}, Eq.~\eqref{eq_appendix:first_item}, and Eq.~\eqref{eq_appendix:second_item}, we obtain:
\begin{equation}
\begin{aligned}\label{eq_appendix:var_attn_2}
\operatorname{Var}(\mathbf{A}_{i,:})&=\frac{1}{n}\sum_{j=1}^{n}\mathbf{A}_{i,j}^{2}-\mu_{i}^{2} \\
&=\frac{1}{n}(\frac{1}{d'}\,
\mathbf{X}_{i,:}\mathbf{M}\,
\mathbf{X}^{\top}\mathbf{X}\,
\mathbf{M}^{\top}\mathbf{X}_{i,:}^{\top}) - (\frac{1}{\sqrt{d'}}\,
\mathbf{X}_{i,:}\mathbf{M}
\boldsymbol{\mu}^{\top})^{2} \\
&=\frac{1}{n}\frac{1}{d'}\,
\mathbf{X}_{i,:}\mathbf{M}\,
\mathbf{X}^{\top}\mathbf{X}\,
\mathbf{M}^{\top}\mathbf{X}_{i,:}^{\top} - \frac{1}{d'}\,
\mathbf{X}_{i,:}\mathbf{M}
\boldsymbol{\mu}^{\top}\boldsymbol{\mu}\mathbf{M}^{\top}\mathbf{X}_{i,:}^{\top} \\
& =\frac{1}{d^{\prime}}\mathbf{X}_{i,:}\mathbf{M}(\frac{1}{n}\mathbf{X}^{\top}\mathbf{X}-\boldsymbol{\mu}^{\top}\boldsymbol{\mu})\mathbf{M}^{\top}\mathbf{X}_{i,:}^{\top}.
\end{aligned}
\end{equation}
We now relate the above expression to the covariance matrix of the input representations. The sample covariance matrix $\mathbf{\Sigma}$ of $\mathbf{X}$ is defined as:
\begin{equation}
\begin{aligned}
\boldsymbol{\Sigma}&=\frac{1}{n}(\mathbf{X}-\mathbf{1}\boldsymbol{\mu})^{\top}(\mathbf{X}-\mathbf{1}\boldsymbol{\mu}) \\
&=\frac{1}{n}\mathbf{X}^{\top}\mathbf{X}-\boldsymbol{\mu}^{\top}\mathbf{1}^{\top}\mathbf{X}-\mathbf{X}^{\top}\mathbf{1}\boldsymbol{\mu}+\boldsymbol{\mu}^{\top}\mathbf{1}^{\top}\mathbf{1}\boldsymbol{\mu} \\
&=\frac{1}{n}\mathbf{X}^{\top}\mathbf{X}-\boldsymbol{\mu}^{\top}n\boldsymbol{\mu}-n\boldsymbol{\mu}^{\top}\boldsymbol{\mu}+\boldsymbol{\mu}^{\top}\mathbf{1}^{\top}\mathbf{1}\boldsymbol{\mu} \\
&=\frac{1}{n}\mathbf{X}^{\top}\mathbf{X}-n\boldsymbol{\mu}^{\top}\boldsymbol{\mu}-n\boldsymbol{\mu}^{\top}\boldsymbol{\mu}+n\boldsymbol{\mu}^{\top}\boldsymbol{\mu} \\
&=\frac{1}{n}(\mathbf{X}^{\top}\mathbf{X}-n\boldsymbol{\mu}^{\top}\boldsymbol{\mu}) \\
&=\frac{1}{n}\mathbf{X}^{\top}\mathbf{X}-\boldsymbol{\mu}^{\top}\boldsymbol{\mu},
\end{aligned}
\end{equation}
where $\mathbf{1}\in\mathbb{R}^{n}$ denotes the all-ones vector and
$\boldsymbol{\mu}=\frac{1}{n}\mathbf{1}^{\top}\mathbf{X}$ is the mean token representation.
Substituting this definition into Eq.~\eqref{eq_appendix:var_attn_2}, we can rewrite the attention variance as:
\begin{equation}
\begin{aligned}
\operatorname{Var}(\mathbf{A}_{i,:})&=\frac{1}{d^{\prime}}\mathbf{X}_{i,:}\mathbf{M}(\frac{1}{n}\mathbf{X}^{\top}\mathbf{X}-\boldsymbol{\mu}^{\top}\boldsymbol{\mu})\mathbf{M}^{\top}\mathbf{X}_{i,:}^{\top}   \\
&=\frac{1}{d^{\prime}}\mathbf{X}_{i,:}\mathbf{M}\boldsymbol{\Sigma}\mathbf{M}^{\top}\mathbf{X}_{i,:}^{\top}
\end{aligned}
\end{equation}
Since the softmax normalization is applied independently to each row of the attention score matrix $\mathbf{A}$, the sparsity of attention can be characterized by the variability within each attention row. We therefore consider the average row-wise variance,
\begin{equation}
\mathbb{E}\!\left[\operatorname{Var}(\mathbf{A}_{i,:})\right]
\triangleq \frac{1}{n}\sum_{i=1}^{n}\operatorname{Var}(\mathbf{A}_{i,:}),
\end{equation}
which serves as a proxy for attention sparsity. Substituting the expression derived above yields
\begin{equation}
\begin{aligned}
\mathbb{E}\!\left[\operatorname{Var}(\mathbf{A}_{i,:})\right]
&=\frac{1}{n}\sum_{i=1}^{n}
\left(
\frac{1}{d'}\,
\mathbf{X}_{i,:}\mathbf{M}\boldsymbol{\Sigma}\mathbf{M}^{\top}\mathbf{X}_{i,:}^{\top}
\right).
\end{aligned}
\end{equation}

Next, we rewrite the summation in a compact trace form. Let $\mathbf{B}\triangleq \mathbf{M}\boldsymbol{\Sigma}\mathbf{M}^{\top}\in\mathbb{R}^{d\times d}$. Then,
\begin{equation}
\begin{aligned}
\frac{1}{n}\sum_{i=1}^{n}\mathbf{X}_{i,:}\mathbf{B}\mathbf{X}_{i,:}^{\top}
&=\frac{1}{n}\sum_{i=1}^{n}\operatorname{tr}\!\left(\mathbf{X}_{i,:}\mathbf{B}\mathbf{X}_{i,:}^{\top}\right) \\
&=\frac{1}{n}\sum_{i=1}^{n}\operatorname{tr}\!\left(\mathbf{B}\mathbf{X}_{i,:}^{\top}\mathbf{X}_{i,:}\right) \\
&=\operatorname{tr}\!\left(\mathbf{B}\cdot \frac{1}{n}\sum_{i=1}^{n}\mathbf{X}_{i,:}^{\top}\mathbf{X}_{i,:}\right) \\
&=\operatorname{tr}\!\left(\mathbf{B}\cdot \frac{1}{n}\mathbf{X}^{\top}\mathbf{X}\right),
\end{aligned}
\end{equation}
where we used $\sum_{i=1}^{n}\mathbf{X}_{i,:}^{\top}\mathbf{X}_{i,:}=\mathbf{X}^{\top}\mathbf{X}$.

Therefore,
\begin{equation}
\begin{aligned}
\mathbb{E}\!\left[\operatorname{Var}(\mathbf{A}_{i,:})\right]
&=\frac{1}{d'}\operatorname{tr}\!\left(\mathbf{M}\boldsymbol{\Sigma}\mathbf{M}^{\top}\cdot \frac{1}{n}\mathbf{X}^{\top}\mathbf{X}\right).
\end{aligned}
\end{equation}

Finally, using the identity
\begin{equation}
\boldsymbol{\Sigma}=\frac{1}{n}\mathbf{X}^{\top}\mathbf{X}-\boldsymbol{\mu}^{\top}\boldsymbol{\mu}
\quad\Longleftrightarrow\quad
\frac{1}{n}\mathbf{X}^{\top}\mathbf{X}=\boldsymbol{\Sigma}+\boldsymbol{\mu}^{\top}\boldsymbol{\mu},
\end{equation}
we obtain
\begin{equation}
\boxed{\begin{aligned}
\mathbb{E}\!\left[\operatorname{Var}(\mathbf{A}_{i,:})\right]
&=\frac{1}{d'}\operatorname{tr}\!\left(\mathbf{M}\boldsymbol{\Sigma}\mathbf{M}^{\top}\left(\boldsymbol{\Sigma}+\boldsymbol{\mu}^{\top}\boldsymbol{\mu}\right)\right).
\end{aligned}}
\end{equation}
\paragraph{Setup.}
Let $\mathbf{X},\widehat{\mathbf{X}}\in\mathbb{R}^{n\times d}$ be two independent samples.
Denote rows by $\mathbf{x}_i=\mathbf{X}_{i,:}\in\mathbb{R}^{1\times d}$ and
$\widehat{\mathbf{x}}_i=\widehat{\mathbf{X}}_{i,:}\in\mathbb{R}^{1\times d}$.
Define the sample means
\begin{equation}
\boldsymbol{\mu}\triangleq \frac{1}{n}\sum_{i=1}^{n}\mathbf{x}_i\in\mathbb{R}^{1\times d},
\qquad
\widehat{\boldsymbol{\mu}}\triangleq \frac{1}{n}\sum_{i=1}^{n}\widehat{\mathbf{x}}_i\in\mathbb{R}^{1\times d},
\end{equation}
and the (row-wise) sample covariance matrices
\begin{equation}
\boldsymbol{\Sigma}\triangleq \frac{1}{n}\sum_{i=1}^{n}(\mathbf{x}_i-\boldsymbol{\mu})^{\top}(\mathbf{x}_i-\boldsymbol{\mu})\in\mathbb{R}^{d\times d},
\qquad
\widehat{\boldsymbol{\Sigma}}\triangleq \frac{1}{n}\sum_{i=1}^{n}(\widehat{\mathbf{x}}_i-\widehat{\boldsymbol{\mu}})^{\top}(\widehat{\mathbf{x}}_i-\widehat{\boldsymbol{\mu}})\in\mathbb{R}^{d\times d}.
\end{equation}
Recall $\mathbf{M}\triangleq \mathbf{W}_{\mathrm{Q}}\mathbf{W}_{\mathrm{K}}^{\top}$ and define
\begin{equation}
\mathbf{G}\triangleq \mathbf{M}^{\top}\mathbf{M}\succeq 0,\qquad \|\mathbf{G}\|_2=\|\mathbf{M}\|_2^2.
\end{equation}

As shown previously, the average row-wise variance of attention logits (pre-softmax) equals
\begin{equation}\label{eq:V_def}
V(\mathbf{X})
\;\triangleq\;\frac{1}{n}\sum_{i=1}^n \operatorname{Var}(\mathbf{A}_{i,:})
=\frac{1}{d'}\operatorname{tr}\!\Big(\mathbf{M}\boldsymbol{\Sigma}\mathbf{M}^{\top}\big(\boldsymbol{\Sigma}+\boldsymbol{\mu}^{\top}\boldsymbol{\mu}\big)\Big).
\end{equation}
Using the cyclic property of trace,
\begin{equation}\label{eq:V_trace_G}
V(\mathbf{X})
=\frac{1}{d'}\operatorname{tr}\!\Big(\mathbf{G}\,\boldsymbol{\Sigma}\,(\boldsymbol{\Sigma}+\boldsymbol{\mu}^{\top}\boldsymbol{\mu})\Big),
\qquad
V(\widehat{\mathbf{X}})
=\frac{1}{d'}\operatorname{tr}\!\Big(\mathbf{G}\,\widehat{\boldsymbol{\Sigma}}\,(\widehat{\boldsymbol{\Sigma}}+\widehat{\boldsymbol{\mu}}^{\top}\widehat{\boldsymbol{\mu}})\Big).
\end{equation}

\paragraph{Step 1: Deterministic bound on $\big|V(\mathbf{X})-V(\widehat{\mathbf{X}})\big|$.}
Let $\Delta_{\Sigma}\triangleq \boldsymbol{\Sigma}-\widehat{\boldsymbol{\Sigma}}$ and
$\Delta_{\mu}\triangleq \boldsymbol{\mu}-\widehat{\boldsymbol{\mu}}$.
Start from the difference:
\begin{align}
V(\mathbf{X})-V(\widehat{\mathbf{X}})
&=\frac{1}{d'}\operatorname{tr}\!\Big(\mathbf{G}\,\boldsymbol{\Sigma}\,(\boldsymbol{\Sigma}+\boldsymbol{\mu}^{\top}\boldsymbol{\mu})
-\mathbf{G}\,\widehat{\boldsymbol{\Sigma}}\,(\widehat{\boldsymbol{\Sigma}}+\widehat{\boldsymbol{\mu}}^{\top}\widehat{\boldsymbol{\mu}})\Big)\nonumber\\
&=\frac{1}{d'}\operatorname{tr}\!\Big(\mathbf{G}\Big[\boldsymbol{\Sigma}(\boldsymbol{\Sigma}+\boldsymbol{\mu}^{\top}\boldsymbol{\mu})
-\widehat{\boldsymbol{\Sigma}}(\widehat{\boldsymbol{\Sigma}}+\widehat{\boldsymbol{\mu}}^{\top}\widehat{\boldsymbol{\mu}})\Big]\Big).\label{eq:Vdiff_start}
\end{align}
Now decompose the bracket term by adding and subtracting $\widehat{\boldsymbol{\Sigma}}(\boldsymbol{\Sigma}+\boldsymbol{\mu}^{\top}\boldsymbol{\mu})$:
\begin{align}
&\boldsymbol{\Sigma}(\boldsymbol{\Sigma}+\boldsymbol{\mu}^{\top}\boldsymbol{\mu})
-\widehat{\boldsymbol{\Sigma}}(\widehat{\boldsymbol{\Sigma}}+\widehat{\boldsymbol{\mu}}^{\top}\widehat{\boldsymbol{\mu}})\nonumber\\
&=\Big(\boldsymbol{\Sigma}(\boldsymbol{\Sigma}+\boldsymbol{\mu}^{\top}\boldsymbol{\mu})
-\widehat{\boldsymbol{\Sigma}}(\boldsymbol{\Sigma}+\boldsymbol{\mu}^{\top}\boldsymbol{\mu})\Big)
+\Big(\widehat{\boldsymbol{\Sigma}}(\boldsymbol{\Sigma}+\boldsymbol{\mu}^{\top}\boldsymbol{\mu})
-\widehat{\boldsymbol{\Sigma}}(\widehat{\boldsymbol{\Sigma}}+\widehat{\boldsymbol{\mu}}^{\top}\widehat{\boldsymbol{\mu}})\Big)\nonumber\\
&=\Delta_{\Sigma}(\boldsymbol{\Sigma}+\boldsymbol{\mu}^{\top}\boldsymbol{\mu})
+\widehat{\boldsymbol{\Sigma}}\Big((\boldsymbol{\Sigma}-\widehat{\boldsymbol{\Sigma}})
+(\boldsymbol{\mu}^{\top}\boldsymbol{\mu}-\widehat{\boldsymbol{\mu}}^{\top}\widehat{\boldsymbol{\mu}})\Big)\nonumber\\
&=\Delta_{\Sigma}(\boldsymbol{\Sigma}+\boldsymbol{\mu}^{\top}\boldsymbol{\mu})
+\widehat{\boldsymbol{\Sigma}}\Delta_{\Sigma}
+\widehat{\boldsymbol{\Sigma}}(\boldsymbol{\mu}^{\top}\boldsymbol{\mu}-\widehat{\boldsymbol{\mu}}^{\top}\widehat{\boldsymbol{\mu}}).\label{eq:decomp_core}
\end{align}
Plugging \eqref{eq:decomp_core} into \eqref{eq:Vdiff_start} and using triangle inequality gives
\begin{align}
\big|V(\mathbf{X})-V(\widehat{\mathbf{X}})\big|
&\le \frac{1}{d'}\Big(
\big|\operatorname{tr}\big(\mathbf{G}\,\Delta_{\Sigma}(\boldsymbol{\Sigma}+\boldsymbol{\mu}^{\top}\boldsymbol{\mu})\big)\big|
+\big|\operatorname{tr}\big(\mathbf{G}\,\widehat{\boldsymbol{\Sigma}}\,\Delta_{\Sigma}\big)\big|\nonumber\\
&\hspace{3cm}
+\big|\operatorname{tr}\big(\mathbf{G}\,\widehat{\boldsymbol{\Sigma}}(\boldsymbol{\mu}^{\top}\boldsymbol{\mu}-\widehat{\boldsymbol{\mu}}^{\top}\widehat{\boldsymbol{\mu}})\big)\big|
\Big).\label{eq:Vdiff_tri}
\end{align}

Next we upper bound each trace term.
We use the inequality
\begin{equation}\label{eq:trace_bound}
|\operatorname{tr}(\mathbf{U}\mathbf{V})|
\le \|\mathbf{U}\|_2\,\|\mathbf{V}\|_*,
\end{equation}
and the sub-multiplicativity of nuclear norm:
\begin{equation}\label{eq:nuclear_mult}
\|\mathbf{A}\mathbf{B}\|_*\le \|\mathbf{A}\|_2\,\|\mathbf{B}\|_*,
\qquad
\|\mathbf{B}\|_*\le d\,\|\mathbf{B}\|_2.
\end{equation}
Applying \eqref{eq:trace_bound} to each term in \eqref{eq:Vdiff_tri} gives
\begin{align}
\big|V(\mathbf{X})-V(\widehat{\mathbf{X}})\big|
&\le \frac{1}{d'}\|\mathbf{G}\|_2\Big(
\|\Delta_{\Sigma}(\boldsymbol{\Sigma}+\boldsymbol{\mu}^{\top}\boldsymbol{\mu})\|_*
+\|\widehat{\boldsymbol{\Sigma}}\,\Delta_{\Sigma}\|_*
+\|\widehat{\boldsymbol{\Sigma}}(\boldsymbol{\mu}^{\top}\boldsymbol{\mu}-\widehat{\boldsymbol{\mu}}^{\top}\widehat{\boldsymbol{\mu}})\|_*
\Big)\nonumber\\
&= \frac{\|\mathbf{M}\|_2^2}{d'}\Big(
\|\Delta_{\Sigma}(\boldsymbol{\Sigma}+\boldsymbol{\mu}^{\top}\boldsymbol{\mu})\|_*
+\|\widehat{\boldsymbol{\Sigma}}\,\Delta_{\Sigma}\|_*
+\|\widehat{\boldsymbol{\Sigma}}(\boldsymbol{\mu}^{\top}\boldsymbol{\mu}-\widehat{\boldsymbol{\mu}}^{\top}\widehat{\boldsymbol{\mu}})\|_*
\Big).\label{eq:det_bound_nuclear}
\end{align}
Using \eqref{eq:nuclear_mult}, we further obtain an operator-norm-only bound:
\begin{align}
\big|V(\mathbf{X})-V(\widehat{\mathbf{X}})\big|
&\le \frac{\|\mathbf{M}\|_2^2}{d'}\Big(
\|\Delta_{\Sigma}\|_2\,\|\boldsymbol{\Sigma}+\boldsymbol{\mu}^{\top}\boldsymbol{\mu}\|_*
+\|\widehat{\boldsymbol{\Sigma}}\|_2\,\|\Delta_{\Sigma}\|_*
+\|\widehat{\boldsymbol{\Sigma}}\|_2\,\|\boldsymbol{\mu}^{\top}\boldsymbol{\mu}-\widehat{\boldsymbol{\mu}}^{\top}\widehat{\boldsymbol{\mu}}\|_*
\Big)\nonumber\\
&\le \frac{d\,\|\mathbf{M}\|_2^2}{d'}\Big(
\|\Delta_{\Sigma}\|_2\,\|\boldsymbol{\Sigma}+\boldsymbol{\mu}^{\top}\boldsymbol{\mu}\|_2
+\|\widehat{\boldsymbol{\Sigma}}\|_2\,\|\Delta_{\Sigma}\|_2
+\|\widehat{\boldsymbol{\Sigma}}\|_2\,\|\boldsymbol{\mu}^{\top}\boldsymbol{\mu}-\widehat{\boldsymbol{\mu}}^{\top}\widehat{\boldsymbol{\mu}}\|_2
\Big).\label{eq:det_bound_op}
\end{align}

Finally we bound the rank-one difference term. Note that
\begin{align}
\boldsymbol{\mu}^{\top}\boldsymbol{\mu}-\widehat{\boldsymbol{\mu}}^{\top}\widehat{\boldsymbol{\mu}}
&=(\boldsymbol{\mu}-\widehat{\boldsymbol{\mu}})^{\top}\boldsymbol{\mu}
+\widehat{\boldsymbol{\mu}}^{\top}(\boldsymbol{\mu}-\widehat{\boldsymbol{\mu}}),\label{eq:rank1_decomp}
\end{align}
so by the triangle inequality and $\| \mathbf{a}^{\top}\mathbf{b}\|_2=\|\mathbf{a}\|_2\|\mathbf{b}\|_2$ for rank-one outer products,
\begin{align}
\|\boldsymbol{\mu}^{\top}\boldsymbol{\mu}-\widehat{\boldsymbol{\mu}}^{\top}\widehat{\boldsymbol{\mu}}\|_2
&\le \|(\boldsymbol{\mu}-\widehat{\boldsymbol{\mu}})^{\top}\boldsymbol{\mu}\|_2
+\|\widehat{\boldsymbol{\mu}}^{\top}(\boldsymbol{\mu}-\widehat{\boldsymbol{\mu}})\|_2\nonumber\\
&=\|\Delta_\mu\|_2\,\|\boldsymbol{\mu}\|_2+\|\widehat{\boldsymbol{\mu}}\|_2\,\|\Delta_\mu\|_2\nonumber\\
&=(\|\boldsymbol{\mu}\|_2+\|\widehat{\boldsymbol{\mu}}\|_2)\,\|\Delta_\mu\|_2.\label{eq:rank1_bound}
\end{align}
Combining \eqref{eq:det_bound_op} and \eqref{eq:rank1_bound} yields a fully deterministic bound:
\begin{equation}\label{eq:det_final}
\boxed{
\big|V(\mathbf{X})-V(\widehat{\mathbf{X}})\big|
\le \frac{d\,\|\mathbf{M}\|_2^2}{d'}\Big(
\|\Delta_{\Sigma}\|_2\,\|\boldsymbol{\Sigma}+\boldsymbol{\mu}^{\top}\boldsymbol{\mu}\|_2
+\|\widehat{\boldsymbol{\Sigma}}\|_2\,\|\Delta_{\Sigma}\|_2
+\|\widehat{\boldsymbol{\Sigma}}\|_2\,(\|\boldsymbol{\mu}\|_2+\|\widehat{\boldsymbol{\mu}}\|_2)\,\|\Delta_\mu\|_2
\Big).
}
\end{equation}

Here we restate the Assumption.
\begin{assumption}[Bounded Token Representations]
Consider a transformer layer and let its input be $\mathbf{X}\in\mathbb{R}^{n\times d}$,
whose rows are token representations $\mathbf{x}_i\in\mathbb{R}^{1\times d}$.
Assume $\{\mathbf{x}_i\}_{i=1}^n$ are from a distribution on $\mathbb{R}^d$
with population mean $\boldsymbol{\mu}_\star \triangleq \mathbb{E}[\mathbf{x}]$ and covariance
$\boldsymbol{\Sigma}_\star \triangleq \mathbb{E}\left[(\mathbf{x}-\boldsymbol{\mu}_\star)^{\top}(\mathbf{x}-\boldsymbol{\mu}_\star)\right]$.
We assume that there exists a constant $R>0$ such that:
\begin{equation}
\|\mathbf{x}\|_2 \le R.
\end{equation}
\end{assumption}

\paragraph{Step 2: High-probability bounds under bounded layer inputs.}~\\

Assume Assumption~\ref{assump:mean_bound} holds, i.e.,
$\|\mathbf{x}\|_2 \le R$ almost surely.
Let $\boldsymbol{\mu}_\star \triangleq \mathbb{E}[\mathbf{x}]$ and
$\boldsymbol{\Sigma}_\star \triangleq \mathbb{E}\!\left[(\mathbf{x}-\boldsymbol{\mu}_\star)^{\top}
(\mathbf{x}-\boldsymbol{\mu}_\star)\right]$.

\subparagraph{Step 2.1: Basic consequences of bounded inputs.}
By Jensen's inequality,
\begin{equation}\label{eq:step2_mu_star_bound}
\|\boldsymbol{\mu}_\star\|_2
= \|\mathbb{E}[\mathbf{x}]\|_2
\le \mathbb{E}\|\mathbf{x}\|_2
\le R.
\end{equation}
Moreover, for any unit vector $\mathbf{u}\in\mathbb{R}^d$,
\begin{equation}
\mathbf{u}^{\top}\boldsymbol{\Sigma}_\star \mathbf{u}
= \mathbb{E}\!\left[(\mathbf{u}^{\top}(\mathbf{x}-\boldsymbol{\mu}_\star))^2\right]
\le \mathbb{E}\|\mathbf{x}-\boldsymbol{\mu}_\star\|_2^2
\le ( \|\mathbf{x}\|_2 + \|\boldsymbol{\mu}_\star\|_2 )^2
\le (2R)^2,
\end{equation}
hence
\begin{equation}\label{eq:step2_sigma_star_bound}
\|\boldsymbol{\Sigma}_\star\|_2 \le 4R^2.
\end{equation}

\subparagraph{Step 2.2: Concentration of the sample mean.}
Define the sample mean
\[
\boldsymbol{\mu} \triangleq \frac{1}{n}\sum_{i=1}^n \mathbf{x}_i,
\qquad
\widehat{\boldsymbol{\mu}} \triangleq \frac{1}{n}\sum_{i=1}^n \widehat{\mathbf{x}}_i.
\]
Let $\mathbf{z}_i \triangleq \mathbf{x}_i - \boldsymbol{\mu}_\star$ and
$\widehat{\mathbf{z}}_i \triangleq \widehat{\mathbf{x}}_i - \boldsymbol{\mu}_\star$.
Then
\[
\boldsymbol{\mu}-\boldsymbol{\mu}_\star = \frac{1}{n}\sum_{i=1}^n \mathbf{z}_i,
\qquad
\widehat{\boldsymbol{\mu}}-\boldsymbol{\mu}_\star = \frac{1}{n}\sum_{i=1}^n \widehat{\mathbf{z}}_i.
\]
Since $\|\mathbf{x}_i\|_2\le R$ and \eqref{eq:step2_mu_star_bound} holds,
\begin{equation}
\|\mathbf{z}_i\|_2
=\|\mathbf{x}_i-\boldsymbol{\mu}_\star\|_2
\le \|\mathbf{x}_i\|_2 + \|\boldsymbol{\mu}_\star\|_2
\le 2R
\quad \text{a.s.},
\end{equation}
and the same bound holds for $\widehat{\mathbf{z}}_i$.

By the vector Hoeffding inequality, for any $\delta\in(0,1)$, with probability at least $1-\delta$,
\begin{equation}\label{eq:step2_mu_conc}
\|\boldsymbol{\mu}-\boldsymbol{\mu}_\star\|_2
\le 2R\sqrt{\frac{2\log(2/\delta)}{n}},
\qquad
\|\widehat{\boldsymbol{\mu}}-\boldsymbol{\mu}_\star\|_2
\le 2R\sqrt{\frac{2\log(2/\delta)}{n}}.
\end{equation}
Consequently, on the same event,
\begin{equation}\label{eq:step2_Delta_mu}
\|\Delta_\mu\|_2
=\|\boldsymbol{\mu}-\widehat{\boldsymbol{\mu}}\|_2
\le 4R\sqrt{\frac{2\log(2/\delta)}{n}}.
\end{equation}

\subparagraph{Step 2.3: Concentration of the sample covariance.}
Define the sample covariance matrix
\[
\boldsymbol{\Sigma}
\triangleq \frac{1}{n}\sum_{i=1}^n (\mathbf{x}_i-\boldsymbol{\mu})^{\top}(\mathbf{x}_i-\boldsymbol{\mu}),
\]
and define the oracle covariance
\[
\widetilde{\boldsymbol{\Sigma}}
\triangleq \frac{1}{n}\sum_{i=1}^n (\mathbf{x}_i-\boldsymbol{\mu}_\star)^{\top}
(\mathbf{x}_i-\boldsymbol{\mu}_\star).
\]
Using $\mathbf{x}_i-\boldsymbol{\mu}
=(\mathbf{x}_i-\boldsymbol{\mu}_\star)-(\boldsymbol{\mu}-\boldsymbol{\mu}_\star)$,
one checks that
\begin{equation}\label{eq:step2_sigma_relation}
\boldsymbol{\Sigma}
= \widetilde{\boldsymbol{\Sigma}}
-(\boldsymbol{\mu}-\boldsymbol{\mu}_\star)^{\top}
(\boldsymbol{\mu}-\boldsymbol{\mu}_\star).
\end{equation}

Let
\[
\mathbf{Y}_i \triangleq
(\mathbf{x}_i-\boldsymbol{\mu}_\star)^{\top}(\mathbf{x}_i-\boldsymbol{\mu}_\star)
- \boldsymbol{\Sigma}_\star,
\qquad
\mathbb{E}[\mathbf{Y}_i]=\mathbf{0}.
\]
Then
\[
\widetilde{\boldsymbol{\Sigma}}-\boldsymbol{\Sigma}_\star
= \frac{1}{n}\sum_{i=1}^n \mathbf{Y}_i.
\]

Since $\|\mathbf{x}_i-\boldsymbol{\mu}_\star\|_2\le 2R$,
\[
\|\mathbf{Y}_i\|_2
\le \|\mathbf{x}_i-\boldsymbol{\mu}_\star\|_2^2 + \|\boldsymbol{\Sigma}_\star\|_2
\le 4R^2 + 4R^2
= 8R^2.
\]
Moreover,
\[
\left\|\sum_{i=1}^n \mathbb{E}[\mathbf{Y}_i^2]\right\|_2
\le n \cdot (8R^2)^2
= 64nR^4.
\]
Applying the matrix Bernstein inequality yields that, with probability at least $1-\delta$,
\begin{equation}\label{eq:step2_tildeSigma_conc}
\|\widetilde{\boldsymbol{\Sigma}}-\boldsymbol{\Sigma}_\star\|_2
\le
C_1 R^2\left(
\sqrt{\frac{\log(2d/\delta)}{n}}
+
\frac{\log(2d/\delta)}{n}
\right),
\end{equation}
for an absolute constant $C_1>0$.

Combining \eqref{eq:step2_sigma_relation}, \eqref{eq:step2_mu_conc}, and \eqref{eq:step2_tildeSigma_conc},
we conclude that, with probability at least $1-\delta$,
\begin{equation}\label{eq:step2_Sigma_conc}
\|\boldsymbol{\Sigma}-\boldsymbol{\Sigma}_\star\|_2
\le
C_2 R^2\left(
\sqrt{\frac{\log(2d/\delta)}{n}}
+
\frac{\log(2d/\delta)}{n}
\right),
\end{equation}
for an absolute constant $C_2>0$.
The same bound holds for $\widehat{\boldsymbol{\Sigma}}$.

Finally, by the triangle inequality,
\begin{equation}\label{eq:step2_Delta_Sigma}
\|\Delta_\Sigma\|_2
\le
C_3 R^2\left(
\sqrt{\frac{\log(2d/\delta)}{n}}
+
\frac{\log(2d/\delta)}{n}
\right),
\end{equation}
for an absolute constant $C_3>0$.

\paragraph{Step 3: Plug-in and conclude the bound.}~\\

Recall the deterministic inequality from Step~1:
\begin{equation}\label{eq:step3_det_final_repeat}
\big|V(\mathbf{X})-V(\widehat{\mathbf{X}})\big|
\le \frac{d\,\|\mathbf{M}\|_2^2}{d'}\Big(
\|\Delta_{\Sigma}\|_2\,\|\boldsymbol{\Sigma}+\boldsymbol{\mu}^{\top}\boldsymbol{\mu}\|_2
+\|\widehat{\boldsymbol{\Sigma}}\|_2\,\|\Delta_{\Sigma}\|_2
+\|\widehat{\boldsymbol{\Sigma}}\|_2\,(\|\boldsymbol{\mu}\|_2+\|\widehat{\boldsymbol{\mu}}\|_2)\,\|\Delta_\mu\|_2
\Big).
\end{equation}

\subparagraph{Step 3.1: Define the high-probability event.}
From Step~2, the bounds \eqref{eq:step2_Delta_mu} and \eqref{eq:step2_Delta_Sigma} each hold
with probability at least $1-\delta/2$ (by replacing $\delta$ with $\delta/2$ in Step~2).
Let $\mathcal{E}$ denote the intersection of these two events:
\begin{equation}\label{eq:step3_event}
\mathcal{E}
\triangleq
\left\{
\|\Delta_\mu\|_2
\le 4R\sqrt{\frac{2\log(4/\delta)}{n}}
\right\}
\cap
\left\{
\|\Delta_\Sigma\|_2
\le
C_3 R^2\left(
\sqrt{\frac{\log(4d/\delta)}{n}}
+
\frac{\log(4d/\delta)}{n}
\right)
\right\}.
\end{equation}
By the union bound, $\mathbb{P}(\mathcal{E})\ge 1-\delta$.
In the remainder of this step, we work on the event $\mathcal{E}$.

\subparagraph{Step 3.2: Deterministic bounds on $\|\boldsymbol{\mu}\|_2$, $\|\widehat{\boldsymbol{\mu}}\|_2$, $\|\boldsymbol{\Sigma}\|_2$, and $\|\widehat{\boldsymbol{\Sigma}}\|_2$.}
Under Assumption~\ref{assump:mean_bound}, we have $\|\mathbf{x}_i\|_2\le R$ and
$\|\widehat{\mathbf{x}}_i\|_2\le R$ almost surely.

\textbf{(a) Bound on sample means.}
By triangle inequality and convexity,
\begin{equation}\label{eq:step3_mu_norm}
\|\boldsymbol{\mu}\|_2
=\left\|\frac{1}{n}\sum_{i=1}^n \mathbf{x}_i\right\|_2
\le \frac{1}{n}\sum_{i=1}^n \|\mathbf{x}_i\|_2
\le R,
\qquad
\|\widehat{\boldsymbol{\mu}}\|_2
\le \frac{1}{n}\sum_{i=1}^n \|\widehat{\mathbf{x}}_i\|_2
\le R.
\end{equation}
Consequently,
\begin{equation}\label{eq:step3_mu_sum}
\|\boldsymbol{\mu}\|_2+\|\widehat{\boldsymbol{\mu}}\|_2 \le 2R.
\end{equation}

\textbf{(b) Bound on sample covariances.}
For each $i$, by \eqref{eq:step3_mu_norm} we have
\begin{equation}
\|\mathbf{x}_i-\boldsymbol{\mu}\|_2
\le \|\mathbf{x}_i\|_2 + \|\boldsymbol{\mu}\|_2
\le R+R
=2R.
\end{equation}
Therefore,
\begin{align}
\|\boldsymbol{\Sigma}\|_2
&=
\left\|
\frac{1}{n}\sum_{i=1}^n (\mathbf{x}_i-\boldsymbol{\mu})^{\top}(\mathbf{x}_i-\boldsymbol{\mu})
\right\|_2
\le
\frac{1}{n}\sum_{i=1}^n
\left\|(\mathbf{x}_i-\boldsymbol{\mu})^{\top}(\mathbf{x}_i-\boldsymbol{\mu})\right\|_2 \nonumber\\
&=
\frac{1}{n}\sum_{i=1}^n \|\mathbf{x}_i-\boldsymbol{\mu}\|_2^2
\le
\frac{1}{n}\sum_{i=1}^n (2R)^2
=
4R^2.\label{eq:step3_Sigma_norm}
\end{align}
The same argument gives
\begin{equation}\label{eq:step3_Sigmahat_norm}
\|\widehat{\boldsymbol{\Sigma}}\|_2 \le 4R^2.
\end{equation}

\subparagraph{Step 3.3: Bound $\|\boldsymbol{\Sigma}+\boldsymbol{\mu}^{\top}\boldsymbol{\mu}\|_2$.}
Using the triangle inequality and \eqref{eq:step3_mu_norm}, \eqref{eq:step3_Sigma_norm},
\begin{equation}\label{eq:step3_Sigma_plus_mu}
\|\boldsymbol{\Sigma}+\boldsymbol{\mu}^{\top}\boldsymbol{\mu}\|_2
\le \|\boldsymbol{\Sigma}\|_2 + \|\boldsymbol{\mu}^{\top}\boldsymbol{\mu}\|_2
= \|\boldsymbol{\Sigma}\|_2 + \|\boldsymbol{\mu}\|_2^2
\le 4R^2 + R^2
=5R^2.
\end{equation}

\subparagraph{Step 3.4: Substitute bounds into \eqref{eq:step3_det_final_repeat}.}
Plugging \eqref{eq:step3_Sigma_plus_mu}, \eqref{eq:step3_Sigmahat_norm}, and \eqref{eq:step3_mu_sum}
into \eqref{eq:step3_det_final_repeat} yields
\begin{align}
\big|V(\mathbf{X})-V(\widehat{\mathbf{X}})\big|
&\le \frac{d\,\|\mathbf{M}\|_2^2}{d'}\Big(
\|\Delta_{\Sigma}\|_2\cdot 5R^2
+ (4R^2)\cdot \|\Delta_{\Sigma}\|_2
+ (4R^2)\cdot (2R)\cdot \|\Delta_\mu\|_2
\Big)\nonumber\\
&= \frac{d\,\|\mathbf{M}\|_2^2}{d'}\Big(
9R^2\,\|\Delta_{\Sigma}\|_2
+ 8R^3\,\|\Delta_\mu\|_2
\Big).\label{eq:step3_after_plug}
\end{align}

\subparagraph{Step 3.5: Use the concentration bounds from Step~2.}
On the event $\mathcal{E}$ in \eqref{eq:step3_event}, substitute the bounds for
$\|\Delta_{\Sigma}\|_2$ and $\|\Delta_\mu\|_2$ into \eqref{eq:step3_after_plug}.
This gives, with probability at least $1-\delta$,
\begin{equation}
\begin{aligned}
\big|V(\mathbf{X})-V(\widehat{\mathbf{X}})\big|
&\le
\frac{d\,\|\mathbf{M}\|_2^2}{d'}\Bigg[
9R^2\cdot
C_3 R^2\left(
\sqrt{\frac{\log(4d/\delta)}{n}}
+
\frac{\log(4d/\delta)}{n}
\right)
+
8R^3\cdot 4R\sqrt{\frac{2\log(4/\delta)}{n}}
\Bigg]\nonumber\\
&=
\frac{d\,\|\mathbf{M}\|_2^2}{d'}\Bigg[
(9C_3) R^4\left(
\sqrt{\frac{\log(4d/\delta)}{n}}
+
\frac{\log(4d/\delta)}{n}
\right)
+
32\sqrt{2}\,R^4\sqrt{\frac{\log(4/\delta)}{n}}
\Bigg].\label{eq:step3_two_terms}
\end{aligned}
\end{equation}
Finally, since $\log(4/\delta)\le \log(4d/\delta)$ for $d\ge 1$,
the last term in \eqref{eq:step3_two_terms} can be absorbed into the
$\sqrt{\log(4d/\delta)/n}$ term. Therefore, there exists an absolute constant $C>0$
such that, with probability at least $1-\delta$,
\begin{equation}\label{eq:step3_final}
\boxed{\big|V(\mathbf{X})-V(\widehat{\mathbf{X}})\big|
\le
\frac{d\,\|\mathbf{M}\|_2^{2}}{d'}\,
C\,R^{4}
\left(
\sqrt{\frac{\log(4d/\delta)}{n}}
+
\frac{\log(4d/\delta)}{n}
\right).}
\end{equation}
Absorbing constant factors inside the logarithm (i.e., replacing $\log(4d/\delta)$ by $\log(d/\delta)$)
yields the bound stated in Theorem~\ref{theorem:layer_stability}.

\end{proof}

\section{Experiment Details and Additional Analysis}
\label{appendix:exp_details}

\subsection{Experiment Details and Additional Results of Figure~\ref{fig:intro_layer}}

\begin{figure*}[h]
\centering
\includegraphics[width=0.6\linewidth]{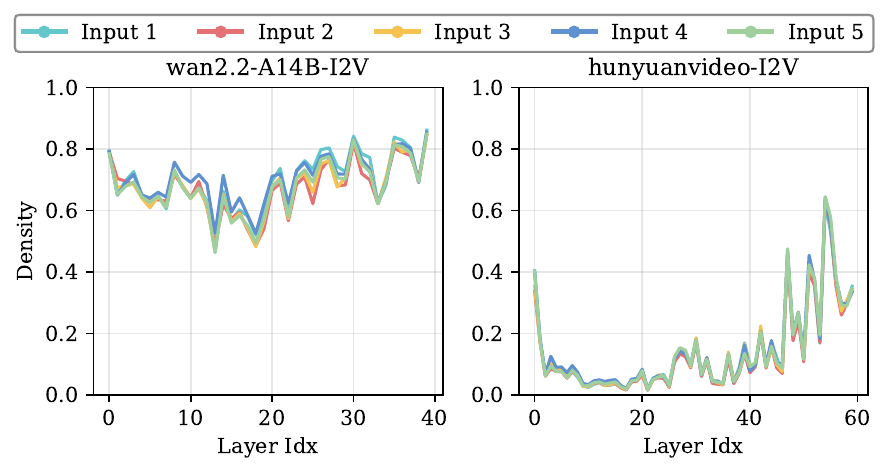}
\caption{Additional layer-wise attention sparsity across different models.}\label{fig_appendix:sparsity_layer}
\end{figure*}

\begin{figure*}[h]
\centering
\includegraphics[width=0.9\linewidth]{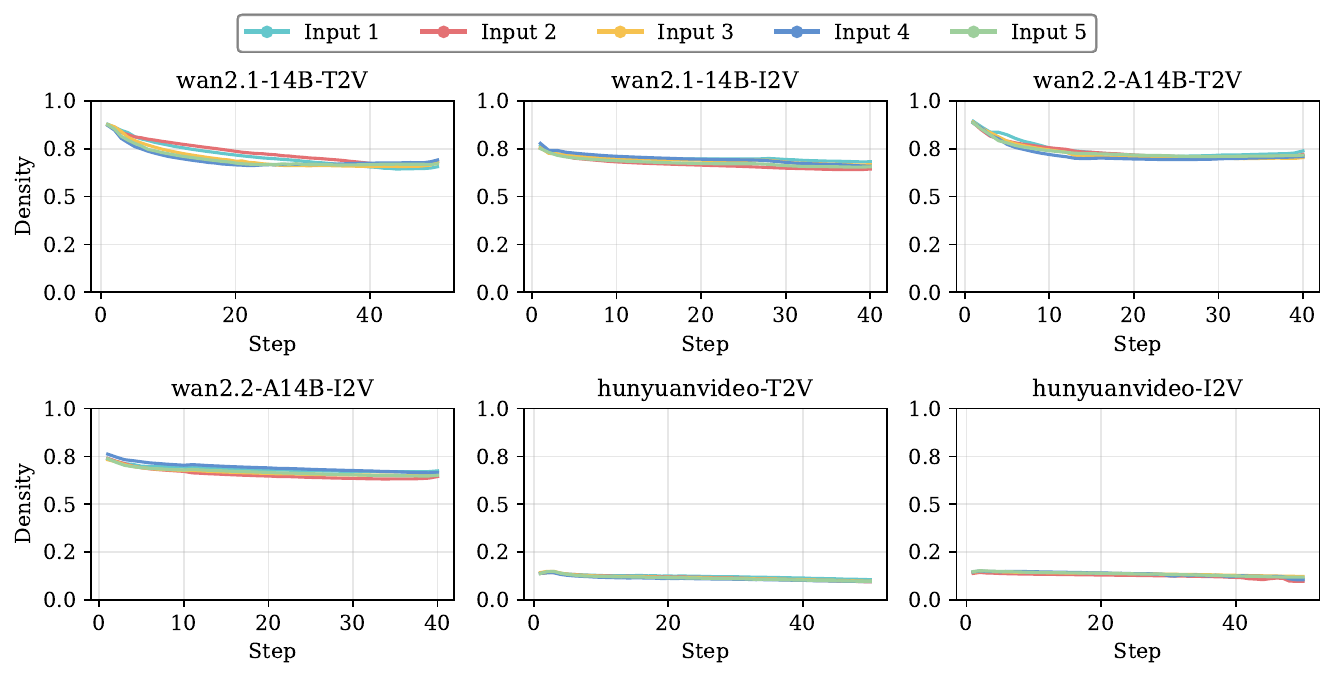}
\caption{Layer attention sparsity across different steps of representative models.}\label{fig_appendix:sparsity_step}
\end{figure*}

We conduct the experiments in Figure~\ref{fig:intro_layer} to analyze \textbf{layer-wise heterogeneity} and \textbf{layer-wise stability} of attention sparsity in video diffusion transformers.
Specifically, we randomly sample a small set of inputs from VBench~\cite{huang2024vbench} for text-to-video models and from VBench++~\cite{zheng2025vbench} for image-to-video models. Experiments are performed at resolutions 720p, corresponding to $720\times1280$. For Wan-series models, videos contain 81 frames, while HunyuanVideo-series models use 129 frames.
For each layer, we measure the attention density, defined as the minimum fraction of attention entries required to cover 80\% of the cumulative attention mass. We evaluate models including Wan2.1-T2V-1.3B, Wan2.1-T2V-14B, Wan2.1-I2V-14B, Wan2.2-T2V-A14B, Wan2.2-I2V-A14B, HunyuanVideo-T2V, and HunyuanVideo-I2V.
Figure~\ref{fig:intro_layer} reports some attention density. Results for the remaining models are provided in Figure~\ref{fig_appendix:sparsity_layer}, and the attention density across different diffusion steps of layers is reported in Figure~\ref{fig_appendix:sparsity_step}.
We observe that the attention sparsity of a specific layer is minimally influenced by varying inputs, suggesting that sparsity is an intrinsic property inherent in the model.

\end{document}